\newtheorem{theorem}{Theorem}
\newtheorem{lemma}[theorem]{Lemma}
\newtheorem{corollary}[theorem]{Corollary}
\newtheorem{proposition}[theorem]{Proposition}
\newtheorem{definition}{Definition}
\newcommand{\Rmax}{R_{\textrm{max}}}
\newcommand{\E}[1]{\mathbb{E}\left[#1\right]}
\newcommand{\norm}[1]{\left\| #1 \right\|}
\newcommand{\defeq}{\stackrel{\textrm{def}}{=}}
\newcommand{\abs}[1]{\left| #1 \right|}
\newcommand{\set}[1]{\left\{#1\right\}}
\newcommand{\M}{\mathcal{M}}
\newcommand{\Mp}{\hat{\mathcal{M}}_p}
\newcommand{\Mpe}{{\mathcal{M}}_p}
\newcommand{\fname}{\texttt{MOReL}}
\newcommand{\U}{\mathcal{U}}
\newcommand{\data}{\mathcal{D}}
\newcommand{\nrew}{\kappa}
\newcommand{\walker}{\texttt{Walker2d-v2}}
\newcommand{\hopper}{\texttt{Hopper-v2}}
\newcommand{\hc}{\texttt{HalfCheetah-v2}}
\newcommand{\ant}{\texttt{Ant-v2}}
\title{MOReL: Model-Based Offline \\ Reinforcement Learning}
\author{%
Rahul Kidambi$^*$ \\
Cornell University, Ithaca \\
\texttt{rkidambi@cornell.edu} \\
\And
Aravind Rajeswaran\thanks{Equal Contributions. Correspond to \texttt{rkidambi@cornell.edu} \ and \ \texttt{aravraj@cs.washington.edu}. This version of the paper extends the results presented at NeurIPS 2020 through the addition of results in the D4RL benchmark suite and by expanding the scope of Lemma~\ref{lem:asymptotic}.
} \\
University of Washington, Seattle \\ Google Brain, Mountain View \\
\texttt{aravraj@cs.washington.edu} \\
\AND
\hspace*{-22pt} Praneeth Netrapalli \\
\hspace*{-22pt} Microsoft Research, India \\
\hspace*{-22pt} \texttt{praneeth@microsoft.com} \\
\And
\hspace*{-10pt} Thorsten Joachims \\
\hspace*{-10pt} Cornell University, Ithaca \\
\hspace*{-10pt} \texttt{tj@cs.cornell.edu} \\
}
\begin{document}

\maketitle

\begin{abstract}
In offline reinforcement learning (RL), the goal is to learn a highly rewarding policy based solely on a dataset of historical interactions with the environment. 
The ability to train RL policies offline would greatly expand where RL can be applied, its data efficiency, and its experimental velocity.
Prior work in offline RL has been confined almost exclusively to model-free RL approaches. In this work, we present~\fname, an algorithmic framework for model-based offline RL. This framework consists of two steps: (a)~learning a {\em pessimistic MDP}~(P-MDP) using the offline dataset; (b)~learning a near-optimal policy in this P-MDP. The learned P-MDP has the property that for {\em any} policy, the performance in the real environment is approximately lower-bounded by the performance in the P-MDP. This enables it to serve as a good surrogate for purposes of policy evaluation and learning, and overcome common pitfalls of model-based RL like model exploitation. Theoretically, we show that~\fname~is minimax optimal (up to log factors) for offline RL. Through experiments, we show that~\fname~matches or exceeds state-of-the-art results in widely studied offline RL benchmarks. Moreover, the modular design of~\fname~enables future advances in its components (e.g., in model learning, planning etc.) to directly translate into improvements for offline RL.

\end{abstract}

\section{Introduction}\label{sec:intro}
The fields of computer vision and NLP have seen tremendous advances by utilizing large-scale offline datasets~\cite{bb77703,FisherDoddington86,ChelbaMSGBK13} for training and deploying deep learning models~\citep{KrizhevskySH17,HintonETAL12,MikolovSCCD13,PetersETAL18}. In contrast, reinforcement learning (RL)~\cite{suttonReinforcement1998} is typically viewed as an online learning process. The RL agent iteratively collects data through interactions with the environment while learning the policy.
Unfortunately, a direct embodiment of this trial and error learning is often inefficient and feasible only with a simulator~\cite{OpenAIHand, AlphaZero, AlphaStar}. Similar to progress in other fields of AI, the ability to learn from offline datasets may hold the key to unlocking the sample efficiency and widespread use of RL agents.

Offline RL, also known as batch RL~\cite{LangeGR12}, involves learning a highly rewarding policy using only a static offline dataset collected by one or more data logging (behavior) policies. Since the data has already been collected, offline RL abstracts away data collection or exploration, and allows prime focus on data-driven learning of policies. This abstraction is suitable for safety sensitive applications like healthcare and industrial automation where careful oversight by a domain expert is necessary for taking exploratory actions or deploying new policies~\cite{Thomas14,thomas2019preventing}. Additionally, large historical datasets are readily available in domains like autonomous driving and recommendation systems, where offline RL may be used to improve upon currently deployed policies.

Due to use of static dataset, offline RL faces unique challenges. Over the course of learning, the agent has to evaluate and reason about various candidate policy updates. This {\em offline policy evaluation} is particularly challenging due to deviation between the state visitation distribution of the candidate policy and the logging policy. Furthermore, this difficulty is exacerbated over the course of learning as the candidate policies increasingly deviate from the logging policy. This change in distribution, as a result of policy updates, is typically called {\em distribution shift} and constitutes a major challenge in offline RL. Recent studies show that directly using off-policy RL algorithms with an offline dataset yields poor results due to distribution shift and function approximation errors~\cite{FujimotoBCQ, KumarBEAR, AgarwalSN19}. To overcome this, prior works have proposed modifications like Q-network ensembles~\cite{FujimotoBCQ,WuTN19BRAC} and regularization towards the data logging policy~\cite{JaquesGHSFLJGP19, KumarBEAR, WuTN19BRAC}. Most notably, prior work in offline RL has been confined almost exclusively to model-free methods~\cite{LarocheT17,FujimotoBCQ,KumarBEAR, JaquesGHSFLJGP19,AgarwalSN19,WuTN19BRAC,Nachum19AlgaeDICE}. 

Model-based RL (MBRL) presents an alternate set of approaches involving the learning of approximate dynamics models which can subsequently be used for policy search. MBRL enables the use of generic priors like smoothness and physics~\cite{Zeng2019TossingBotLT} for model learning, and a wide variety of planning algorithms~\cite{Todorov2005, tassa2012synthesis, MCTSSurvey, Munos2008FiniteTimeBF, SchulmanPPO}. As a result, MBRL algorithms have been highly sample efficient for online RL~\cite{RajeswaranGameMBRL, JannerFZL19}. However, direct use of MBRL algorithms with offline datasets can prove challenging, again due to the distribution shift issue. In particular, since the dataset may not span the entire state-action space, the learned model is unlikely to be globally accurate. As a result, planning using a learned model without any safeguards against model inaccuracy can result in {\em ``model exploitation''}~\cite{KurutachCDTA18, Clavera2018ModelBasedRL, JannerFZL19, RajeswaranGameMBRL}, yielding poor results~\cite{RossB12}. In this context, we study the pertinent question of how to effectively regularize and adapt model-based methods for offline RL.

\begin{figure}[t!]
    \centering
    \hspace*{-10pt}
    \vspace*{-5pt}
    \includegraphics[width=1.05\textwidth]{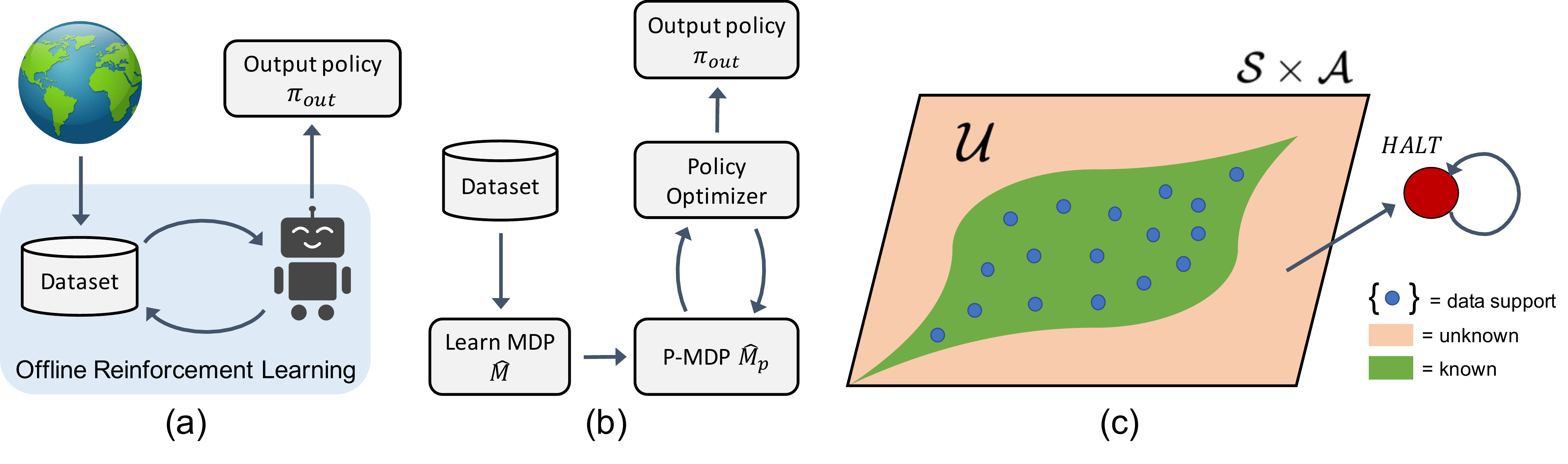}
    \caption{(a) Illustration of the offline RL paradigm. (b) Illustration of our framework,~\fname, which learns a pessimistic MDP (P-MDP) from the dataset and uses it for policy search. (c) Illustration of the P-MDP, which partitions the state-action space into known (green) and unknown (orange) regions, and also forces a transition to a low reward absorbing state (HALT) from unknown regions. Blue dots denote the support in the dataset. Function approximation and generalization allows us to learn about states not numerically identical to data support. See algorithm~\ref{alg:mbpm} for more details.}
    \vspace*{-10pt}
    \label{fig:illustration}
\end{figure}

{\bf Our Contributions:} The principal contribution of our work is the development of~\fname~(\underline{M}odel-based \underline{O}ffline \underline{Re}inforcement \underline{L}earning), a novel model-based framework for offline RL (see figure~\ref{fig:illustration} for an overview).~\fname~enjoys rigorous theoretical guarantees, enables transparent algorithm design, and offers state of the art (SOTA) results on widely studied offline RL benchmarks. 
\vspace*{-5pt}
\begin{itemize}[leftmargin=*]
    \itemsep0em
    \item \fname~consists of two modular steps: (a)~learning a {\em pessimistic MDP}~(P-MDP) using the offline dataset; and (b)~learning a near-optimal policy for the P-MDP. For {\em any} policy, the performance in the true MDP (environment) is approximately lower bounded by the performance in the P-MDP, making it a suitable surrogate for purposes of policy evaluation and learning. This also guards against model exploitation, which often plagues MBRL.
    \item The P-MDP partitions the state space into ``known'' and ``unknown'' regions, and uses a large negative reward for unknown regions. This provides a regularizing effect during policy learning by heavily penalizing policies that visit unknown states. Such a regularization in the space of state visitations, afforded by a model-based approach, is particularly well suited for offline RL. In contrast, model-free algorithms~\cite{KumarBEAR, WuTN19BRAC} are forced to regularize the policies directly towards the data logging policy, which can be overly conservative.
    \item Theoretically, we establish upper bounds for the sub-optimality of a policy learned with~\fname, and a lower-bound for the sub-optimality of a policy learnable by {\em any} offline RL algorithm. We find that these bounds match upto log factors, suggesting that~\fname~is nearly minimax optimal.
    \item We evaluate~\fname~on standard benchmark tasks used for offline RL.~\fname~obtains SOTA results in $12$ out of $20$ environment-dataset configurations, and performs competitively in the rest. In contrast, the best prior algorithm~\citep{WuTN19BRAC} obtains SOTA results in only $5$ (out of $20$) configurations.
\end{itemize}

In addition, this version of the paper extends the results presented at NeurIPS 2020 through addition of results in the D4RL benchmark suite and by expanding the scope of Lemma~\ref{lem:asymptotic}.

\section{Related Work}\label{sec:related}
Offline RL dates to at least the work of~\citet{LangeGR12}, and has applications in healthcare~\cite{Gottesman18,WangZHZ18,YuR019}, recommendation systems~\cite{StrehlLK10, SwaminathanJ15, CovingtonAS16, ChenBCJBC18}, dialogue systems~\cite{ZhouSRE17,JaquesGHSFLJGP19,Karampatziakis19}, and autonomous driving~\cite{SallabAPY17}. Algorithms for offline RL typically fall under three categories.
The first approach utilizes {\bf importance sampling} and is popular in contextual bandits~\cite{LiCLW10, StrehlLK10, SwaminathanJ15}. For full offline RL,~\citet{LiuSAB19} perform planning with learned importance weights~\cite{HallakM17, GeladaB19, NachumDualDICE} while using a notion of pessimism for regularization. However,~\citet{LiuSAB19} don't explicitly consider generalization and their guarantees become degenerate if the logging policy does not span the support of the optimal policy. In contrast, our approach accounts for generalization, leads to stronger theoretical guarantees, and obtains SOTA results on challenging offline RL benchmarks.
The second, and perhaps most popular approach is based on {\bf approximate dynamic programming~(ADP)}. Recent works have proposed modification to standard ADP algorithms~\cite{Watkins89, MnihKSRVBGRFOPB15, LillicrapHPHETS15, HaarnojaSAC} towards stabilizing Bellman targets with ensembles~\cite{AgarwalSN19, FujimotoBCQ, JaquesGHSFLJGP19} and regularizing the learned policy towards the data logging policy~\cite{FujimotoBCQ, KumarBEAR, WuTN19BRAC}. ADP-based offline RL has also be studied theoretically~\cite{Munos2008FiniteTimeBF, Chen2019InformationTheoreticCI}. 
However, these works again don't study the impact of support mismatch between logging policy and optimal policy.
Finally, {\bf model-based RL} has been explored only sparsely for offline RL in literature~\citep{RossB12,GhavamzadehPC16} (see appendix for details). The work of \citet{RossB12} considered a straightforward approach of learning a model from offline data, followed by planning. They showed that this can have arbitrarily large sub-optimality. In contrast, our work develops a new framework utilizing the notion of pessimism, and shows both theoretically and experimentally that MBRL can be highly effective for offline RL. Concurrent to our work, Yu et al.~\cite{MOPO} also study a model-based approach to offline RL.

A cornerstone of~\fname~is the P-MDP which partitions the state space into known and unknown regions. Such a hard partitioning was considered in early works like $E^3$~\cite{KearnsSingh2002}, R-MAX~\cite{Brafman2001RMAXA}, and metric-$E^3$~\cite{KakadeKL03}, but was not used to encourage pessimism. 
Similar ideas have been explored in related settings like online RL~\citep{Jiang18, vemula2020planning} and imitation learning~\cite{Ainsworth19}.
Our work differs in its focus on offline RL, where we show the P-MDP construction plays a crucial role. Moreover, direct practical instantiations of $E^3$ and metric-$E^3$ with function approximation have remained elusive.

\section{Problem Formulation}\label{sec:formulation}
A {\bf Markov Decision Process (MDP)} is represented by $\M=\{S, A, r, P, \rho_0, \gamma\}$, where, $S$ is the state-space, $A$ is the action-space, $r:S\times A\to [-R_{\max}, R_{\max}]$ is the reward function, $P:S\times A\times S \to \mathbb{R}_+$ is the transition kernel, $\rho_0$ is the initial state distribution, and $\gamma$ the discount factor. A policy defines a mapping from states to a probability distribution over actions, $\pi: S \times A \to \mathbb{R}_+$. %
The goal is to obtain a policy that maximizes expected performance with states sampled according to $\rho_0$, i.e.:\vspace*{-3pt}
\begin{equation}
    \label{eq:RLObjective}
    \max_\pi \ \  J_{\rho_0}(\pi, \M) := \mathbb{E}_{s \sim \rho_0} \left[ V^\pi(s, \M) \right], {\text{ where, \ }} V^\pi(s, \M) = \mathbb{E}\left[\sum_{t=0}^\infty \gamma^t r(s_t,a_t) | s_0 = s\right].
\end{equation}
To avoid notation clutter, we suppress the dependence on $\rho_0$ when understood from context, i.e. $J(\pi, \M) \equiv J_{\rho_0}(\pi, \M)$. We denote the optimal policy using $\pi^* := \arg \max_\pi J_{\rho_0}(\pi, \M)$. Typically, a class of parameterized policies $\pi_\theta \in \Pi(\Theta)$ are considered, and the parameters $\theta$ are optimized.

In {\bf offline RL}, we are provided with a static dataset of interactions with the environment consisting of $\data = \{ (s_i, a_i, r_i, s_i') \}_{i=1}^N$. The data can be collected using one or more logging (or behavioral) policies denoted by $\pi_b$. We do not assume logging policies are known in our formulation. %
Given $\data$, the goal in offline RL is to output a $\pi_{\text{out}}$ with minimal sub-optimality, i.e. $J(\pi^*, \M) - J(\pi_{\text{out}}, \M)$. In general, it may not be possible to learn the optimal policy with a static dataset (see section~\ref{ssec:theory}). Thus, we aim to design algorithms that would result in as low sub-optimality as possible.

{\bf Model-Based RL (MBRL)} involves learning an MDP $\hat{\M} = \{ S, A, r, \hat{P}, \hat{\rho_0}, \gamma \}$ which uses the learned transitions $\hat{P}$ instead of the true transition dynamics $P$. In this paper, we assume the reward function $r$ is known and use it in $\hat{M}$. If $r(\cdot)$ is unknown, it can also be learned from data. The initial state distribution $\hat{\rho_0}$ can either be learned from the data or $\rho_0$ can be used if known. Analogous to $\M$, we use $J_{\hat{\rho_0}}(\pi, \hat{\M})$ or simply $J(\pi, \hat{\M})$ to denote performance of $\pi$ in $\hat{M}$.

\section{Algorithmic Framework}\label{sec:framework}
For ease of exposition and clarity, we first begin by presenting an idealized version of~\fname, for which we also establish theoretical guarantees. Subsequently, we describe a practical version of~\fname~that we use in our experiments. Algorithm~\ref{alg:mbpm} presents the broad framework of~\fname. We now study each component of~\fname~in greater detail.

\begin{algorithm}[h!]
\caption{~\fname: Model Based Offline Reinforcement Learning}
\label{alg:mbpm}
\begin{algorithmic}[1]
\STATE  {\bf Require} Dataset $\mathcal{D}$
\STATE	Learn approximate dynamics model $\hat{P}:S\times A\to S$ using $\mathcal{D}$.
\STATE	Construct $\alpha$-USAD, $U^\alpha:S\times A\to \{\textrm{TRUE},\textrm{FALSE}\}$ using $\mathcal{D}$ \ (see Definition~\ref{def:usad}).
\STATE  Construct the {\em pessimistic} MDP $\Mp=\{S\cup\textrm{HALT},A,r_p,\hat{P}_p,\hat{\rho}_0,\gamma\}$ \ (see Definition~\ref{def:pessimistic_mdp}).
\STATE  (OPTIONAL) Use a behavior cloning approach to estimate the behavior policy $\hat{\pi}_b$.
\STATE  $\pi_{\text{out}}\leftarrow \textrm{PLANNER}(\Mp,\pi_{\text{init}}=\hat{\pi}_b)$
\STATE  {\bf Return} $\pi_{\text{out}}$.
\end{algorithmic}
\end{algorithm}

{\bf Learning the dynamics model:} The first step involves using the offline dataset to learn an approximate dynamics model $\hat{P}(\cdot|s,a)$. This can be achived through maximum likelihood estimation or other techniques from generative and dynamics modeling~\cite{Venkatraman2015DAD, Bengio2015ScheduledSF, Vaswani2017AttentionIA}.
Since the offline dataset may not span the entire state space, the learned model may not be globally accurate. So, a na\"ive MBRL approach that directly plans with the learned model may over-estimate rewards in unfamiliar parts of the state space,
resulting in a highly sub-optimal policy~\cite{RossB12}. We overcome this with the next step.
    
{\bf Unknown state-action detector (USAD):} We partition the state-action space into known and unknown regions based on the accuracy of learned model as follows.
\begin{definition} 
\label{def:usad}
($\alpha$-USAD)
Given a state-action pair $(s,a)$, define an unknown state action detector as:%
\begin{equation}
\begin{small}
    U^\alpha(s,a) = 
    \begin{cases} 
        \textrm{FALSE \ (i.e. Known)} & \mathrm{if} \  \ D_{TV} \left( \hat{P}(\cdot|s,a), P(\cdot|s,a) \right) \leq \alpha \text{ \ can be guaranteed} \\
        \textrm{TRUE \ \ (i.e. Unknown)} & \mbox{otherwise}
    \end{cases}
\end{small}
\end{equation}%
\end{definition}\vspace*{-7pt}
Here $D_{TV} \left( \hat{P}(\cdot|s,a), P(\cdot|s,a) \right)$ denotes the total variation distance between $\hat{P}(\cdot|s,a)$ and $P(\cdot|s,a)$.
Intuitively, USAD provides confidence about where the learned model is accurate. It flags state-actions for which the model is guarenteed to be accurate as ``known'', while flagging state-actions where such a guarantee cannot be ascertained as ``unknown''. Note that USAD is based on the ability to guarantee the accuracy, and is not an inherent property of the model. In other words, there could be states where the model is actually accurate, but flagged as unknown due to the agent's inability to guarantee accuracy. Two factors contribute to USAD's effectiveness: (a) data availability: having sufficient data points ``close'' to the query; (b) quality of representations: certain representations, like those based on physics, can lead to better generalization guarantees. This suggests that larger datasets and research in representation learning can potentially enable stronger offline RL results.

{\bf Pessimistic MDP construction:} We now construct a pessimistic MDP (P-MDP) using the learned model and USAD, which penalizes policies that venture into unknown parts of state-action space.
\begin{definition}
\label{def:pessimistic_mdp}
The $(\alpha, \nrew)$-pessimistic MDP is described by $\Mp := \{S\cup\textrm{HALT}, A, r_p, \hat{P}_p, \hat{\rho}_0, \gamma\}$. Here, $S$ and $A$ are states and actions in the MDP $\M$. HALT is an additional absorbing state we introduce into the state space of $\Mp$. $\hat{\rho}_0$ is the initial state distribution learned from the dataset $\data$. $\gamma$ is the discount factor (same as $\M$). The modified reward and transition dynamics are given by:
\begin{equation*}
\begin{split}
    \hat{P}_{p}(s'|s,a) = 
    \begin{cases}
    \delta(s'=\mathrm{HALT}) & 
    \begin{array}{l}
            \mathrm{if} \  U^\alpha(s,a) = \mathrm{TRUE}   \\
          \mathrm{or} \ \ s = \mathrm{HALT}
    \end{array} \\
    \hat{P}(s'|s,a) & \mbox{\ \ otherwise}
    \end{cases}
\end{split}
\quad
\begin{split}
    r_p(s,a) = 
    \begin{cases}
    -\nrew & \mathrm{if} \  s = \mathrm{HALT} \\
    r(s,a) & \mbox{otherwise}
    \end{cases}
\end{split}
\end{equation*}
\end{definition}\vspace*{-3mm}
$\delta(s'=\mathrm{HALT})$ is the Dirac delta function, which forces the MDP to transition to the absorbing state $\mathrm{HALT}$. For unknown state-action pairs, we use a reward of $-\nrew$, while all known state-actions receive the same reward as in the environment. The P-MDP heavily punishes policies that visit unknown states, thereby providing a safeguard against distribution shift and model exploitation.

{\bf Planning:} The final step in~\fname~is to perform planning in the P-MDP defined above. For simplicity, we assume a planning oracle that returns an $\epsilon_{\pi}$-sub-optimal policy in the P-MDP. A number of algorithms based on MPC~\cite{Todorov2005, WilliamsMPPI}, search-based planning~\cite{RRT, MCTSSurvey}, dynamic programming~\cite{MnihKSRVBGRFOPB15, Munos2008FiniteTimeBF}, or policy optimization~\cite{SchulmanPPO, HaarnojaSAC, Rajeswaran17nips, FakoorP30} can be used to approximately realize this..

\subsection{Theoretical Results}\label{ssec:theory}
In order to state our results, we begin by defining the notion of hitting time.
\begin{definition}(Hitting time)
Given an MDP $\M$, starting state distribution $\rho_0$, state-action pair $(s,a)$ and a policy $\pi$, the {\em hitting time} $T_{(s,a)}^{\pi}$ is defined as the random variable denoting the first time action $a$ is taken at state $s$ by $\pi$ on $\M$, and is equal to $\infty$ if $a$ is never taken by $\pi$ from state $s$. For a set of state-action pairs $\mathcal{S} \subseteq S \times A$, we define $T_{\mathcal{S}}^{\pi} \defeq \min_{(s,a) \in \mathcal{S}}T_{(s,a)}^{\pi}$.
\end{definition}
\vspace*{-5pt}
We are now ready to present our main result with the proofs deferred to the appendix.
\begin{theorem}\label{thm:pess-world}(Policy value with pessimism)
The value of any policy $\pi$ on the original MDP $\M$ and its $(\alpha,R_{\max})$-pessimistic MDP $\Mp$ satisfies:\vspace*{-1mm}%
\begin{align*}
    J_{\hat{\rho_0}}(\pi, \Mp) &\geq J_{\rho_0}(\pi, \M) - \frac{2\Rmax}{1-\gamma} \cdot D_{TV}(\rho_0, \hat{\rho_0}) - \frac{2 \gamma  \Rmax}{(1-\gamma)^2} \cdot \alpha - \frac{2 \Rmax}{1-\gamma} \cdot \E{\gamma^{T_{\U}^\pi}}, \mbox{ and } \\
    J_{\hat{\rho_0}}(\pi, \Mp) &\leq J_{\rho_0}(\pi, \M) + \frac{2\Rmax}{1-\gamma} \cdot D_{TV}(\rho_0, \hat{\rho_0}) + \frac{2 \gamma \Rmax}{(1-\gamma)^2}\cdot \alpha,
\end{align*}\vspace*{-5pt}
\iffalse\begin{align*}
    V_{\hat{\rho_0}}(\pi,\Mp) &\geq V_{\rho_0}(\pi,\M) - \frac{2\Rmax}{1-\gamma} \cdot D_{TV}(\rho_0, \hat{\rho_0}) - \frac{2 \gamma  \Rmax}{(1-\gamma)^2} \cdot \alpha - \frac{2 \Rmax}{1-\gamma} \cdot \E{\gamma^{T_{\U}^\pi}}, \mbox{ and } \\
    V_{\hat{\rho_0}}(\pi,\Mp) &\leq V_{\rho_0}(\pi,\M) + \frac{2\Rmax}{1-\gamma} \cdot D_{TV}(\rho_0, \hat{\rho_0}) + \frac{2 \gamma \Rmax}{(1-\gamma)^2}\cdot \alpha,
\end{align*}\fi
where $T_{\U}^\pi$ denotes the hitting time of unknown states $\U \defeq \set{(s,a): U^{\alpha}(s,a) = \textrm{TRUE}}$ by $\pi$ on $\M$.
\end{theorem}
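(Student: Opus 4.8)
The plan is to interpolate between $\M$ and $\Mp$ through an intermediate \emph{true-dynamics pessimistic MDP} $\Mpe := \{S\cup\mathrm{HALT}, A, r_p, P_p, \rho_0, \gamma\}$, built exactly like $\Mp$ but retaining the \emph{true} transition kernel $P$ on known state-actions and the true initial distribution $\rho_0$. Then $\M$ and $\Mpe$ differ only in how they treat the unknown set $\U$ (HALT with reward $-\Rmax$ versus ordinary continuation), while $\Mpe$ and $\Mp$ differ only in the kernel on known states and in the initial distribution. I would prove the theorem by bounding each gap separately and chaining them.

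First I would compare $\M$ and $\Mpe$ via a trajectory coupling. Running $\pi$ from the same draw of $\rho_0$ in both MDPs, the two trajectories are identical up to the hitting time $T := T_{\U}^\pi$, since the kernels, rewards, and initial law agree on the known region; in particular the discounted rewards coincide for all $t \le T$. From time $T+1$ onward $\Mpe$ sits in HALT accruing $-\Rmax$ per step, whereas $\M$ continues to accrue rewards in $[-\Rmax,\Rmax]$. Conditioning on $T$, the tail contribution in $\Mpe$ equals $-\gamma^{T+1}\Rmax/(1-\gamma)$ and the tail in $\M$ lies in $[-\gamma^{T+1}\Rmax/(1-\gamma),\, \gamma^{T+1}\Rmax/(1-\gamma)]$. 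Taking expectations over $T$ and using $\gamma^{T+1}\le\gamma^{T}$ yields both $J_{\rho_0}(\pi,\Mpe) \ge J_{\rho_0}(\pi,\M) - \frac{2\Rmax}{1-\gamma}\E{\gamma^{T_{\U}^\pi}}$ and, crucially, $J_{\rho_0}(\pi,\Mpe) \le J_{\rho_0}(\pi,\M)$. This one-sided upper inequality is where the choice $\nrew=\Rmax$ matters: because $-\Rmax$ is the smallest admissible reward, the pessimistic continuation can never exceed the true continuation, so the hitting-time penalty drops out of the upper bound entirely.

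Next I would compare $\Mpe$ and $\Mp$ by a simulation-lemma argument, splitting the gap as $[J_{\hat{\rho_0}}(\pi,\Mp) - J_{\hat{\rho_0}}(\pi,\Mpe)] + [J_{\hat{\rho_0}}(\pi,\Mpe) - J_{\rho_0}(\pi,\Mpe)]$. The two MDPs share the reward $r_p$ (bounded by $\Rmax$, since the HALT reward is $-\Rmax$) and differ in dynamics only on known state-actions, where USAD (Definition~\ref{def:usad}) guarantees $D_{TV}(\hat{P}(\cdot|s,a),P(\cdot|s,a))\le\alpha$; on the unknown region and on HALT both kernels deterministically send mass to HALT, so the per-step total-variation between $\hat{P}_p$ and $P_p$ is at most $\alpha$ uniformly. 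Writing the value difference in the telescoping form $V_1^\pi - V_2^\pi = \gamma(I-\gamma \hat{P}_p^\pi)^{-1}(\hat{P}_p^\pi - P_p^\pi)V_2^\pi$ and using $\|V_2^\pi\|_\infty \le \Rmax/(1-\gamma)$ gives $\bigl|J_{\hat{\rho_0}}(\pi,\Mp) - J_{\hat{\rho_0}}(\pi,\Mpe)\bigr| \le \frac{2\gamma\Rmax}{(1-\gamma)^2}\alpha$. The remaining bracket differs only in the initial law, so $\bigl|J_{\hat{\rho_0}}(\pi,\Mpe) - J_{\rho_0}(\pi,\Mpe)\bigr| = \bigl|\sum_s(\hat{\rho}_0(s)-\rho_0(s))V^\pi(s,\Mpe)\bigr| \le \frac{2\Rmax}{1-\gamma}D_{TV}(\rho_0,\hat{\rho_0})$.

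Finally I would chain the two steps: the lower bound follows by composing the hitting-time inequality with the two simulation estimates, and the upper bound follows from $J_{\rho_0}(\pi,\Mpe)\le J_{\rho_0}(\pi,\M)$ together with the same estimates, recovering both displayed inequalities. I expect the main obstacle to be the bookkeeping in the coupling step — making the ``identical until $T_{\U}^\pi$, then HALT'' decomposition rigorous for stochastic trajectories and tracking the exact discount powers so the constants come out as $\frac{2\Rmax}{1-\gamma}$ rather than something looser — together with the care needed to confirm that the per-step TV distance between $\hat{P}_p$ and $P_p$ is genuinely controlled by $\alpha$ everywhere, including at HALT and on the unknown region where the two kernels agree exactly.
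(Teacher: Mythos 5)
Your proposal is correct and follows essentially the same route as the paper's own proof: the same intermediate true-dynamics pessimistic MDP $\Mpe$, the same couple-until-the-hitting-time argument for the $\M$ versus $\Mpe$ comparison (including the key observation that $\nrew = \Rmax$ makes the upper bound one-sided, with no hitting-time term), and the same decomposition of the $\Mpe$ versus $\Mp$ gap into a model-error term and an initial-distribution term. The only (cosmetic) difference is in the model-error step, where the paper uses a probabilistic coupling that fails by time $t$ with probability at most $1-(1-\alpha)^t$, while you use the equivalent algebraic identity $\gamma(I-\gamma\hat{P}_p^\pi)^{-1}(\hat{P}_p^\pi - P_p^\pi)V^\pi$; both yield the same bound $\frac{2\gamma\Rmax}{(1-\gamma)^2}\cdot\alpha$.
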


Theorem~\ref{thm:pess-world} can be used to bound the suboptimality of output policy $\pi_{\textrm{out}}$ of Algorithm~\ref{alg:mbpm}.
\begin{corollary}\label{cor:subopt}
Suppose PLANNER in Algorithm~\ref{alg:mbpm} returns an $\epsilon_{\pi}$ sub-optimal policy. Then, we have
$$J_{\rho_0}(\pi^*, \M) - J_{\rho_0}(\pi_{\textrm{out}}, \M) \leq \epsilon_{\pi} + \frac{4 \Rmax}{1-\gamma} \cdot D_{TV}(\rho_0, \hat{\rho_0}) + \frac{4\gamma \Rmax}{(1-\gamma)^2} \cdot \alpha + \frac{2 \Rmax}{1-\gamma} \cdot \E{\gamma^{T_{\U}^{\pi^*}}}.$$
\end{corollary}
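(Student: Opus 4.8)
The plan is to obtain the bound by chaining the two inequalities of Theorem~\ref{thm:pess-world} through the pessimistic MDP $\Mp$, linking the values of $\pi^*$ and $\pi_{\textrm{out}}$ on $\M$ via their values on $\Mp$. The essential observation is that the upper and lower bounds of the theorem must be applied to \emph{different} policies: the lower bound, which alone carries the hitting-time penalty, to $\pi^*$, and the upper bound, which does not, to $\pi_{\textrm{out}}$. Throughout I use the $(\alpha, \Rmax)$-pessimistic MDP (i.e. $\nrew = \Rmax$), matching the instantiation for which Theorem~\ref{thm:pess-world} is stated.

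First I would record the planner guarantee. By assumption, PLANNER returns a policy $\pi_{\textrm{out}}$ that is $\epsilon_{\pi}$-suboptimal in $\Mp$, so $J_{\hat{\rho_0}}(\pi_{\textrm{out}}, \Mp) \geq \max_{\pi} J_{\hat{\rho_0}}(\pi, \Mp) - \epsilon_{\pi}$. Since the maximum over all policies is at least the value of the particular policy $\pi^*$, this yields $J_{\hat{\rho_0}}(\pi_{\textrm{out}}, \Mp) \geq J_{\hat{\rho_0}}(\pi^*, \Mp) - \epsilon_{\pi}$. Note that the optimal policy of $\Mp$ need not coincide with $\pi^*$; it is precisely this slack---that $\pi^*$ serves only as a feasible comparator rather than the optimum of $\Mp$---that lets the argument close cleanly.

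Next I would assemble the chain. I would apply the upper bound of Theorem~\ref{thm:pess-world} to $\pi_{\textrm{out}}$, rearranged into the form $J_{\rho_0}(\pi_{\textrm{out}}, \M) \geq J_{\hat{\rho_0}}(\pi_{\textrm{out}}, \Mp) - \frac{2\Rmax}{1-\gamma} D_{TV}(\rho_0, \hat{\rho_0}) - \frac{2\gamma \Rmax}{(1-\gamma)^2}\alpha$; then substitute the planner inequality from the previous step; and finally apply the lower bound of Theorem~\ref{thm:pess-world} to $\pi^*$ to replace $J_{\hat{\rho_0}}(\pi^*, \Mp)$ by $J_{\rho_0}(\pi^*, \M)$ minus its three error terms, including the hitting-time contribution $\frac{2\Rmax}{1-\gamma}\E{\gamma^{T_{\U}^{\pi^*}}}$. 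Collecting terms, the two separate copies of the $D_{TV}(\rho_0, \hat{\rho_0})$ and $\alpha$ penalties add to produce the factor-$4$ constants, the lone hitting-time term survives with its factor $2$, and $\epsilon_{\pi}$ appears exactly once. Rearranging to isolate $J_{\rho_0}(\pi^*, \M) - J_{\rho_0}(\pi_{\textrm{out}}, \M)$ then gives precisely the claimed inequality.

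There is no substantive obstacle here, as the corollary is an immediate consequence of Theorem~\ref{thm:pess-world}; the only point requiring care is bookkeeping---pairing each policy with the correct (upper versus lower) bound and checking that the $D_{TV}$ and $\alpha$ constants double while the hitting-time term does not. In particular, I would double-check that the hitting-time penalty enters only through the lower bound applied to $\pi^*$, so that the final expression involves $\E{\gamma^{T_{\U}^{\pi^*}}}$ rather than a term depending on $\pi_{\textrm{out}}$.
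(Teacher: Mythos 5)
Your proposal is correct and follows essentially the same route as the paper: the paper's own derivation (carried out inside the proof of Lemma~\ref{lem:asymptotic}) chains exactly the same three inequalities---the upper bound of Theorem~\ref{thm:pess-world} applied to $\pi_{\textrm{out}}$, the planner guarantee with $\pi^*$ as comparator, and the lower bound applied to $\pi^*$---yielding the doubled $D_{TV}$ and $\alpha$ constants and the single hitting-time term. Your bookkeeping of which policy receives which bound matches the paper precisely.
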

Theorem~\ref{thm:pess-world} indicates that the difference in any policy $\pi$'s value in the $(\alpha,R_{\max})$ pessimistic MDP $\Mp$ and the original MDP $\M$ depends on:
i) the total variation distance between the true and learned starting state distribution $D_{TV}(\rho_0, \hat{\rho_0})$, ii) the maximum total variation distance $\alpha$ between the learned model $\hat{P}(\cdot \vert s,a)$ and the true model $P(\cdot \vert s,a)$ over all {\em known} states i.e., $\set{(s,a) \vert U^{\alpha}(s,a) = \textrm{FALSE}}$ and, iii) the hitting time $T_{\U}^{\pi^*}$ of unknown states $\U$ on the original MDP $\M$ under the optimal policy $\pi^*$. As the dataset size increases, $D_{TV}(\rho_0, \hat{\rho_0})$ and $\alpha$ approach zero, indicating $\E{\gamma^{T_{\U}^{\pi^*}}}$ determines the sub-optimality in the limit. 
For comparison to prior work, Lemma~\ref{lem:disc} in Appendix~\ref{app:proofs} bounds this quantity in terms of state-action visitation distribution, which for a policy $\pi$ on $\M$ is expressed as ${\small d^{\pi,\M}(s,a) \defeq (1-\gamma)\sum_{t=0}^\infty\gamma^t P(s_t=s,a_t=a|s_0\sim\rho_0,\pi,\M)}$.
Furthermore, we can also show that~\fname~learns a policy that improves over the behavioral policy with high probability. The following lemma presents both of these results:
\begin{lemma}\label{lem:asymptotic}(Upper bound; \fname~improves over the behavioral policy)
Suppose $\rho_{0,\textrm{min}}>0$, $p_{\textrm{min}}>0$ and $d^{\pi_b}_\textrm{min} > 0$ are the smallest non-zero elements of initial distribution $\rho_0$, state transition probabilities $P(\cdot|s,a)$, and discounted state probability distribution $d^{\pi_b,\M}(s,a)$ respectively. If the dataset $\data$ consists of $n \geq \frac{C}{\left(d^{\pi_b}_\textrm{min}\right)^2} \cdot \log \frac{1}{\delta d^{\pi_b}_\textrm{min}}$ independent trajectories sampled according to a behavior policy $\pi_b$ with initial distribution $\rho_0$, then the output $\pi_{\textrm{out}}$ of Algorithm~\ref{alg:mbpm} satisfies:
\begin{align*}
    J_{\rho_0}(\pi_b,\M) - J_{\rho_0}(\pi_{\textrm{out}},\M) &\leq \epsilon_{\pi} + \epsilon_n, \mbox{ and } \\
    J_{\rho_0}(\pi^*,\M) - J_{\rho_0}(\pi_{\textrm{out}},\M) &\leq  \epsilon_{\pi} + \frac{2 \Rmax}{1-\gamma} \cdot \E{\gamma^{T_{\U}^{\pi^*}}} + \epsilon_n \leq \epsilon_{\pi} + \frac{2 R_{\max}}{(1-\gamma)^2} \cdot d^{\pi^*, \M}(\U) + \epsilon_n,
\end{align*}
with probability at least $1-C \delta$, where $C$ is a large enough constant and
\begin{align*}
    \epsilon_n \defeq \frac{4C R_\textrm{max}}{(1-\gamma)\rho_{0,\textrm{min}}} \cdot \sqrt{\frac{\log \frac{1}{\delta\rho_{0,\textrm{min}}}}{n}} + \frac{4C \gamma R_\textrm{max}}{(1-\gamma)^2p_{\textrm{min}}} \cdot \sqrt{\frac{\log \frac{1}{\delta p_\textrm{min} d^{\pi_b}_\textrm{min}}}{d^{\pi_b}_\textrm{min} \cdot n}}
\end{align*}
is an error term related to finite samples that goes to zero as $n \rightarrow \infty$.
\end{lemma}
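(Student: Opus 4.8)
The plan is to reduce both inequalities to Theorem~\ref{thm:pess-world} and Corollary~\ref{cor:subopt}, and then to replace the two statistical quantities $D_{TV}(\rho_0,\hat{\rho_0})$ and $\alpha$ appearing there by high-probability concentration bounds whose sum is exactly $\epsilon_n$. For the \emph{second inequality} (sub-optimality relative to $\pi^*$) I would simply invoke Corollary~\ref{cor:subopt}, which already yields
\[
J_{\rho_0}(\pi^*,\M) - J_{\rho_0}(\pi_{\textrm{out}},\M) \leq \epsilon_{\pi} + \frac{4\Rmax}{1-\gamma}D_{TV}(\rho_0,\hat{\rho_0}) + \frac{4\gamma\Rmax}{(1-\gamma)^2}\alpha + \frac{2\Rmax}{1-\gamma}\E{\gamma^{T_{\U}^{\pi^*}}}.
\]
It then remains to substitute the high-probability bounds on $D_{TV}(\rho_0,\hat{\rho_0})$ and $\alpha$ derived below (which become the two summands of $\epsilon_n$), and to apply Lemma~\ref{lem:disc} for the final relaxation $\E{\gamma^{T_{\U}^{\pi^*}}} \leq \frac{1}{1-\gamma}\, d^{\pi^*,\M}(\U)$.

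For the \emph{first inequality} (improvement over $\pi_b$), the crucial observation is that $\pi_b$ almost surely never enters $\U$. Since the dataset is generated by $\pi_b$ and, under the sample-size condition $n \geq \frac{C}{(d^{\pi_b}_\textrm{min})^2}\log\frac{1}{\delta d^{\pi_b}_\textrm{min}}$, every state-action pair with $d^{\pi_b,\M}(s,a) \geq d^{\pi_b}_\textrm{min}$ accumulates enough samples for USAD to certify it as known, we have $T_{\U}^{\pi_b} = \infty$ and hence $\E{\gamma^{T_{\U}^{\pi_b}}} = 0$ with high probability. Applying the lower bound of Theorem~\ref{thm:pess-world} to $\pi_b$ (with the vanished hitting-time term) gives $J_{\hat{\rho_0}}(\pi_b,\Mp) \geq J_{\rho_0}(\pi_b,\M) - \text{(stat.\ errors)}$; because $\pi_b$ is feasible in $\Mp$ and PLANNER is $\epsilon_{\pi}$-sub-optimal, $J_{\hat{\rho_0}}(\pi_{\textrm{out}},\Mp) \geq J_{\hat{\rho_0}}(\pi_b,\Mp) - \epsilon_{\pi}$; finally the upper bound of Theorem~\ref{thm:pess-world} applied to $\pi_{\textrm{out}}$ converts back to $J_{\rho_0}(\pi_{\textrm{out}},\M)$. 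Chaining these three steps yields $J_{\rho_0}(\pi_b,\M) - J_{\rho_0}(\pi_{\textrm{out}},\M) \leq \epsilon_{\pi} + \epsilon_n$, the doubled statistical error from the two uses of Theorem~\ref{thm:pess-world} being absorbed into $\epsilon_n$ by adjusting the constant $C$.

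The heart of the argument is the statistical control. For $D_{TV}(\rho_0,\hat{\rho_0})$, the support contains at most $1/\rho_{0,\textrm{min}}$ states, so a per-coordinate concentration bound for the $n$ i.i.d.\ initial states together with a union bound over the support yields the first term of $\epsilon_n$. For $\alpha$, I would define the known set through a visitation-count threshold and bound each per-$(s,a)$ model error coordinate-wise, the effective count of a known pair being at least $\sim n\, d^{\pi_b}_\textrm{min}$; a union bound over the relevant state-action pairs (at most $1/d^{\pi_b}_\textrm{min}$) and their reachable next states (at most $1/p_\textrm{min}$ each) then gives the second term. The main obstacle is that transitions within a trajectory are \emph{not} independent and the visitation distribution $d^{\pi_b,\M}$ is \emph{discounted} whereas the dataset counts are undiscounted; I must therefore translate the discounted lower bound $d^{\pi_b}_\textrm{min}$ into a genuine lower bound on expected visit counts and invoke a multiplicative Chernoff argument---this is what forces the $(d^{\pi_b}_\textrm{min})^{-2}$ sample requirement---to guarantee that \emph{all} support pairs are simultaneously visited often enough to be certified known. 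This uniform count-concentration step, made to hold over the whole support, is where essentially all the technical care resides.
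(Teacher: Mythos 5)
Your proposal follows essentially the same route as the paper's own proof: both establish the Chernoff-plus-union-bound controls on $D_{TV}(\rho_0,\hat{\rho_0})$ and on the per-pair model error (yielding the two terms of $\epsilon_n$), deduce that the sample-size condition forces $\U \cap \textrm{Supp}(d^{\pi_b,\M}) = \emptyset$ so that $T_{\U}^{\pi_b}=\infty$, and then chain the upper bound of Theorem~\ref{thm:pess-world} (applied to $\pi_{\textrm{out}}$), the PLANNER guarantee, and the lower bound of Theorem~\ref{thm:pess-world} (applied to $\pi_b$, respectively $\pi^*$ with Lemma~\ref{lem:disc}) to obtain the two assertions. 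Your added care about the discounted-versus-raw visit counts and within-trajectory dependence addresses a subtlety the paper's write-up passes over silently, but it does not change the argument's structure.
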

% \begin{corollary}\label{cor:discstatedist}(Upper bound)
% Suppose the dataset $\data$ is large enough so that $\alpha = D_{TV}(\rho_0,\hat{\rho_0})=0$. Then, the output ${\pi}_{\textrm{out}}$ of Algorithm~\ref{alg:mbpm} satisfies:
% \begin{align*}
%     J_{\rho_0}(\pi^*,\M) - J_{\rho_0}(\pi_{\textrm{out}},\M) \leq  \epsilon_{\pi} + \frac{2 \Rmax}{1-\gamma} \cdot \E{\gamma^{T_{\U}^{\pi^*}}} \leq \epsilon_{\pi} + \frac{2 R_{\max}}{(1-\gamma)^2} \cdot d^{\pi^*, \M}(\U)
% \end{align*}\vspace*{-3mm}
% \end{corollary}

The bound consists of three terms: (i) a sampling error term $\epsilon_n$ which decreases with larger dataset sizes that is typical of offline RL; (ii) an optimization error term $\epsilon_\pi$ that can be made small with additional compute to find the optimal policy in the learned model; and (iii)~a distribution shift term that depends on the coverage of the offline dataset and overlap with the optimal policy.

Prior results~\cite{FujimotoBCQ,LiuSAB19} assume that $d^{\pi^*,\M}(\U_D)=0$, where $\U_D \defeq \set{(s,a) \vert (s,a,r,s') \notin \mathcal{D}} \supseteq \U$ is the set of state action pairs that don't occur in the offline dataset, and guarantee finding an optimal policy under this assumption. Our result significantly improves upon these in three ways:
i)~$\U_D$ is replaced by a smaller set $\U$, leveraging the generalization ability of learned dynamics model,
ii)~the sub-optimality bound
is extended to the setting where full support coverage is not satisfied i.e., $d^{\pi^*,\M}(\U)>0$, and iii)~the sub-optimality bound on $\pi_{\textrm{out}}$ is stated in terms of unknown state hitting time $T_{\U}^{\pi^*}$, which can be significantly better than a bound that depends only on $d^{\pi^*,\M}(\U)$.
To further strengthen our results, the following proposition shows that Lemma~\ref{lem:asymptotic} is tight up to $\log$ factors.

\begin{proposition}\label{prop:counter}(Lower bound)
For any discount factor $\gamma \in [0.95,1)$, support mismatch $\epsilon \in \left(0,\frac{1-\gamma}{\log \frac{1}{1-\gamma}}\right]$ and reward range $[-\Rmax,\Rmax]$, there is an MDP $\M$, starting state distribution $\rho_0$, optimal policy $\pi^*$ and a dataset collection policy $\pi_b$ such that
i) $d^{\pi^*,\M}(\U_D) \leq \epsilon$, and
ii) any policy $\hat{\pi}$ that is learned solely using the dataset collected with $\pi_b$ satisfies:
\begin{align*}%
\begin{small}
    J_{\rho_0}(\pi^*,\mathcal{M}) - J_{\rho_0}(\hat{\pi}, \mathcal{M}) \geq \frac{\Rmax}{4(1-\gamma)^2} \cdot \frac{\epsilon}{\log \frac{1}{1-\gamma}},
\end{small}
\end{align*}
where $\U_D \defeq \set{(s,a) : (s,a,r,s') \notin \mathcal{D} \mbox{ for any } r,s'}$ denotes state action pairs not in the dataset $\mathcal{D}$.
\end{proposition}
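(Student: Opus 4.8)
The plan is to prove this as a minimax lower bound via a two-point (Le Cam-style) indistinguishability argument: I will exhibit a pair of MDPs that induce identical data distributions under the chosen behavior policy $\pi_b$, yet whose optimal behaviors differ in value by the target amount. Since the learner sees only the dataset, it cannot tell the two instances apart, so it must be suboptimal on at least one of them; the MDP $\M$ asserted by the proposition is then taken to be whichever member of the pair the given learner handles worse. I would arrange the two instances to agree everywhere except at a single ``critical'' state-action pair $(s_\star,a_\star)$ that $\pi_b$ never plays, which guarantees both that $(s_\star,a_\star)\in\U_D$ and that no information about its consequences ever leaks into $\data$.

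Concretely, I would build a small gadget with a start region that $\pi_b$ covers well (so those transitions are identical across both instances) feeding into the critical pair. In the ``good'' instance $\M_+$, taking $a_\star$ at $s_\star$ routes into a high-reward absorbing region worth $\Theta\!\left(\frac{\Rmax}{1-\gamma}\right)$, while in the ``bad'' instance $\M_-$ it routes into a low-reward region; every other reward and transition is shared. I would calibrate $\rho_0$ and the transitions leading into $s_\star$ so that, under the optimal policy of $\M_+$ (which takes $a_\star$), the discounted visitation to the critical pair is at most $\epsilon$, which immediately gives condition (i), namely $d^{\pi^*,\M_+}(\U_D)\le\epsilon$. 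The key quantitative step is the standard visitation-to-value identity: a policy that secures the reward behind $(s_\star,a_\star)$ gains, relative to one that avoids it, a per-step advantage of order $\Rmax$ weighted by the discounted reach probability, which aggregates to a value gap of order $\frac{\Rmax}{(1-\gamma)^2}\cdot d^{\pi^*,\M}(\U_D)$ between the two optimal behaviors.

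Because $\data$ is identically distributed under $\M_+$ and $\M_-$, any fixed $\hat{\pi}$ incurs at least half of this value gap on one of the two instances; naming that instance $\M$ yields (ii), and condition (i) holds for it either way (it is $\M_+$, where $\pi^*$ reaches $\U_D$ with visitation $\le\epsilon$, or $\M_-$, where $\pi^*$ avoids $\U_D$ entirely). I expect the main obstacle to be the quantitative calibration rather than the conceptual skeleton: the transition probabilities into $s_\star$, the depth at which the critical pair sits, and the magnitude of the reward gap behind it must be chosen so that $d^{\pi^*,\M}(\U_D)\le\epsilon$ and the value gap simultaneously match the claimed form, and pinning down the exact constant $\frac14$ together with the $\frac{1}{\log\frac{1}{1-\gamma}}$ factor — along with the admissibility restriction $\epsilon\le\frac{1-\gamma}{\log\frac{1}{1-\gamma}}$ that keeps the constructed reach probabilities valid and the estimate conservative — is where the delicate bookkeeping concentrates. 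A secondary point to handle with care is the quantifier order: the asserted single MDP $\M$ is chosen adversarially against the given learner from the indistinguishable pair, which is the standard reading for such minimax statements and is exactly what makes the bound complementary, up to the $\log$ gap, to the upper bound of Lemma~\ref{lem:asymptotic}.
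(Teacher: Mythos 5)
Your high-level plan (two indistinguishable instances, with the asserted MDP chosen adversarially against the given learner) is legitimate, and your reading of the quantifiers matches what the paper's proof actually does: its phrase ``after permuting the actions if necessary'' likewise tailors the instance to the learner. However, there is a concrete gap at the step where you claim that making the discounted visitation of the critical pair at most $\epsilon$ ``immediately gives'' condition (i). The set $\U_D$ is not just the critical pair: it is \emph{every} state-action pair absent from $\data$, and in your gadget that includes the entire high-reward absorbing region that $\pi^*$ enters after taking $a_\star$ --- nothing in your construction puts those transitions into the dataset, since $\pi_b$ only covers the start region feeding into $(s_\star,a_\star)$. Consequently $d^{\pi^*,\M_+}(\U_D)$ is not $(1-\gamma)\sum_t \gamma^t P\left(\text{hit } (s_\star,a_\star) \text{ at } t\right)$; it accumulates mass for \emph{all} time after the gateway is crossed, so it is of order the reach probability $p_0$ itself. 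Condition (i) then forces $p_0 \le \epsilon$, and the largest value gap your two instances can create is of order $p_0 \cdot \frac{\Rmax}{1-\gamma} \approx \frac{\Rmax \epsilon}{1-\gamma}$, which falls short of the required $\frac{\Rmax}{4(1-\gamma)^2}\cdot\frac{\epsilon}{\log\frac{1}{1-\gamma}}$ by a factor of roughly $\frac{1}{(1-\gamma)\log\frac{1}{1-\gamma}}$, diverging as $\gamma \to 1$. So, as written, the construction does not reach the claimed bound.

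The missing idea --- and it is exactly the trick in the paper's construction --- is that the reward sink reached by $\pi^*$ must itself be \emph{known}: in the paper's chain MDP, the rewarding transition $(k+1,a_1,1,k+1)$ is in the dataset ($\rho_0$ places mass $1-p_0$ on state $k+1$ and the behavior policy loops there), so $\pi^*$'s visitation of $\U_D$ only counts the short unknown passage leading to the reward, not the infinite time spent collecting it. If you graft this onto your gadget --- let $\rho_0$ put mass $1-p_0$ on the good absorbing state $g$, let $\pi_b$ generate $g$'s self-loop data, and let only the gateway transition $(s_\star,a_\star)$ differ between $\M_+$ (goes to $g$) and $\M_-$ (goes to a $-\Rmax$ absorbing state) --- then $d^{\pi^*,\M_+}(\U_D)=(1-\gamma)p_0$, you may take $p_0=\epsilon/(1-\gamma)$, and your Le Cam argument gives any learner a gap of at least $\frac{\gamma\Rmax\epsilon}{2(1-\gamma)^2}$ on one of the two instances, which satisfies the proposition. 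Once repaired, your route is genuinely different from the paper's and quantitatively stronger: the paper uses a corridor of $k\approx 10\log\frac{1}{1-\gamma}$ unknown states with adversarially permuted actions, so that any data-based policy needs expected time $\exp(k/5)\ge (1-\gamma)^{-2}$ to traverse it, and it is precisely this corridor length that costs them the $1/\log\frac{1}{1-\gamma}$ factor; a single unknown gateway with a good/bad destination pair avoids the corridor, and hence the log factor, entirely. But without the known-sink repair, the step you marked as immediate is exactly where the argument fails.
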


We see that for $\epsilon < (1-\gamma)/(\log \tfrac{1}{1-\gamma})$, the lower bound obtained by Proposition~\ref{prop:counter} on the suboptimality of any offline RL algorithm matches the asymptotic (as $n \rightarrow \infty$) upper bound of Lemma~\ref{lem:asymptotic} up to an additional log factor. For $\epsilon > (1-\gamma)/(\log \tfrac{1}{1-\gamma})$, Proposition~\ref{prop:counter} also implies (by choosing $\epsilon' = (1-\gamma)/(\log \tfrac{1}{1-\gamma}) < \epsilon$) that any offline algorithm must suffer at least constant factor suboptimality in the worst case. Finally, we note that as the size of dataset $\mathcal{D}$ increases to $\infty$, Theorem~\ref{thm:pess-world} and the optimality of PLANNER (i.e., $\epsilon_\pi=0$) together imply that $J_{\rho_0}(\pi_{\textrm{out}},\M) \geq J_{\rho_0}(\pi_b,\M)$.
% The proof is similar to that of Corollary~\ref{cor:discstatedist} and is presented in Appendix~\ref{app:proofs}.

\subsection{Practical Implementation Of~\fname}\label{ssec:practicalInstantiation}
We now present a practical instantiation of~\fname~(algorithm~\ref{alg:mbpm}) utilizing a recent model-based NPG approach~\citep{RajeswaranGameMBRL}. The principal difference is the specialization to offline RL and construction of the P-MDP using an ensemble of learned dynamics models.

{\bf Dynamics model learning:} We consider Gaussian dynamics models~\citep{RajeswaranGameMBRL}, $\hat{P}(\cdot|s,a) \equiv \mathcal{N} \left( f_\phi(s,a), \Sigma \right)$, with mean $f_\phi(s,a) = s + \sigma_\Delta \ \textrm{MLP}_\phi \left( (s - \mu_s)/\sigma_s, (a - \mu_a)/\sigma_a \right)$, where $\mu_s, \sigma_s, \mu_a, \sigma_a$ are the mean and standard deviations of states/actions in $\data$; $\sigma_\Delta$ is the standard deviation of state differences, i.e. $\Delta = s' - s, (s,s')\in\data$; this parameterization ensures local continuity since the MLP learns only the state differences. The MLP parameters are optimized using maximum likelihood estimation with mini-batch stochastic optimization using Adam~\cite{KingmaB14}.

{\bf Unknown state-action detector (USAD):} In order to partition the state-action space into known and unknown regions, we use uncertainty quantification~\cite{OsbandAC18, Azizzadenesheli18, BurdaESK19, POLO}. In particular, we consider approaches that track uncertainty using the predictions of ensembles of models~\cite{OsbandAC18, POLO}. We learn multiple models $\{ f_{\phi_1}, f_{\phi_2}, \ldots \}$ where each model uses a different weight initialization and are optimized with different mini-batch sequences. Subsequently, we compute the ensemble discrepancy as $\mathrm{disc}(s,a) = \max_{i, j} \ \left\| f_{\phi_i}(s,a)-f_{\phi_j}(s,a) \right\|_2$, where $f_{\phi_i}$ and $f_{\phi_j}$ are members of the ensemble. With this, we implement USAD as below, with $\mathrm{threshold}$ being a tunable hyperparameter.
\begin{equation}
    U_{\text{practical}}(s,a) = 
    \begin{cases} 
        \textrm{FALSE \ (i.e. Known)} & \mathrm{if} \  \ \mathrm{disc}(s,a) \leq \mathrm{threshold} \\
        \textrm{TRUE \ (i.e. Unknown)} & \mathrm{if} \  \ \mathrm{disc}(s,a) > \mathrm{threshold}
    \end{cases}.
\end{equation}

\section{Experiments}\label{sec:experiments}
Through our experimental evaluation, we aim to answer the following questions:
\begin{enumerate}[leftmargin=*]
    \itemsep0em
    \item {\bf Comparison to prior work:} How does~\fname~compare to prior SOTA offline RL algorithms~\citep{FujimotoBCQ,KumarBEAR,WuTN19BRAC} in commonly studied benchmark tasks?
    \item {\bf Quality of logging policy:} How does the quality (value) of the data logging (behavior) policy, and by extension the dataset, impact the quality of the policy learned by~\fname?
    \item {\bf Importance of pessimistic MDP:} %
    How does~\fname~compare against a na\"ive model-based RL approach that directly plans in a learned model without any safeguards?
    \item {\bf Transfer from pessimistic MDP to environment:} Does learning progress in the P-MDP, which we use for policy learning, effectively translate or transfer to learning progress in the environment?
\end{enumerate}

To answer the above questions, we consider commonly studied benchmark tasks from OpenAI gym~\cite{Brockman16} simulated with MuJoCo~\cite{TodorovET12}. Our experimental setup closely follows prior work~\citep{FujimotoBCQ, KumarBEAR, WuTN19BRAC}. The tasks considered include~\hopper,~\hc,~\ant,~and \walker, which are illustrated in Figure~\ref{fig:gym_tasks_illustration}. We consider five different logged data-sets for each environment, totalling 20 environment-dataset combinations. Datasets are collected based on the work of~\citet{WuTN19BRAC}, with each dataset containing the equivalent of 1 million timesteps of environment interaction. We first partially train a policy $(\pi_p)$ to obtain values around 1000, 4000, 1000, and 1000 respectively for the four environments. The first exploration strategy, \texttt{Pure}, involves collecting the dataset solely using $\pi_p$. The four other datasets are collected using a combination of $\pi_p$, a {\em noisy} variant of $\pi_p$, and an untrained random policy. The noisy variant of $\pi_p$ utilizes either epsilon-greedy or Gaussian noise, resulting in configurations $\texttt{eps-1},\texttt{eps-3},\texttt{gauss-1},\texttt{gauss-3}$ that signify various types and magnitudes of noise added to $\pi_p$. Please see appendix for additional experimental details.

\begin{figure}[b!]
    \centering
    \includegraphics[width=\textwidth]{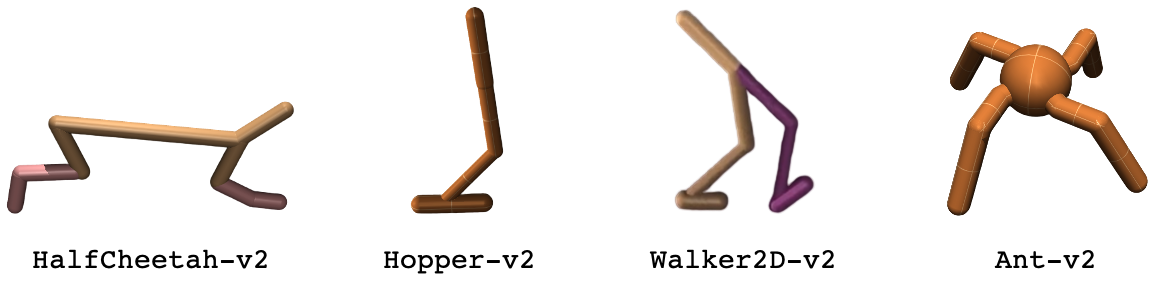}
    \caption{Illustration of the suite of tasks considered in this work. These tasks require the RL agent to learn locomotion gaits for the illustrated simulated characters.}
    \vspace*{-5pt}
    \label{fig:gym_tasks_illustration}
\end{figure}

We parameterize the dynamics model using 2-layer ReLU-MLPs and use an ensemble of 4 dynamics models to implement USAD as described in Section~\ref{ssec:practicalInstantiation}. We parameterize the policy using a 2-layer tanh-MLP, and train it using model-based NPG~\citep{RajeswaranGameMBRL}. We evaluate the learned policies using rollouts in the (real) environment, but these rollouts are not made available to the algorithm in any way for purposes of learning. This is similar to evaluation protocols followed in prior work~\cite{WuTN19BRAC, FujimotoBCQ, KumarBEAR}. We present all our results averaged over $5$ different random seeds. Note that we use the same hyperparameters for all random seeds. In contrast, the prior works whose results we compare against tune hyper-parameters separately for each random seed~\cite{WuTN19BRAC}. 

% ================================

\begin{table}[t!]
\vspace*{-20pt}
\caption{Results in various environment-exploration combinations. Baselines are reproduced from~\citet{WuTN19BRAC}. Prior work does not provide error bars. For MOReL results, error bars indicate the standard deviation across 5 different random seeds. We choose SOTA result based on the average performance.}\vspace{10pt}
\label{table:all_results_1}
\begin{minipage}{.49\linewidth}
\begin{adjustbox}{width=\columnwidth,center}
\begin{tabular}{l|ccc|c||c}
\toprule
\multicolumn{6}{c}{\bf Environment: Ant-v2}\\ %
\midrule
Algorithm &
\vtop{\hbox{\strut BCQ}\hbox{\strut\cite{FujimotoBCQ}}} &
\vtop{\hbox{\strut BEAR}\hbox{\strut\cite{KumarBEAR}}} &
\vtop{\hbox{\strut {\scriptsize BRAC}}\hbox{\strut\cite{WuTN19BRAC}}} &
\vtop{\hbox{\strut Best}\hbox{\strut Baseline}}  &
\vtop{\hbox{\strut \fname}\hbox{\strut (Ours)}} 
\\
\midrule
\texttt{Pure} & 1921 & 2100 & \underline{2839} & 2839 & {\bf 3663$\pm$247}\\
\texttt{Eps-1} & 1864 & 1897 & \underline{2672} & 2672 & {\bf 3305$\pm$413}\\
\texttt{Eps-3} & 1504 & 2008 & \underline{2602} & 2602 & {\bf 3008$\pm$231}\\
\texttt{Gauss-1} & 1731 & 2054 & \underline{2667} & 2667 & {\bf 3329$\pm$270}\\
\texttt{Gauss-3} & 1887 & 2018 & 2640 & 2661 & {\bf 3693$\pm$33}\\
\bottomrule
\end{tabular}
\end{adjustbox}
\end{minipage}
\hspace*{5pt}
\begin{minipage}{.49\linewidth}
\begin{adjustbox}{width=\columnwidth,center}
\begin{tabular}{l|ccc|c||c}
\toprule
\multicolumn{6}{c}{\bf Environment: Hopper-v2}\\ %
\midrule
Algorithm &
\vtop{\hbox{\strut BCQ}\hbox{\strut\cite{FujimotoBCQ}}} &
\vtop{\hbox{\strut BEAR}\hbox{\strut\cite{KumarBEAR}}} &
\vtop{\hbox{\strut {\scriptsize BRAC}}\hbox{\strut\cite{WuTN19BRAC}}} &
\vtop{\hbox{\strut Best}\hbox{\strut Baseline}}  &
\vtop{\hbox{\strut \fname}\hbox{\strut (Ours)}} 
\\
\midrule
\texttt{Pure} & 1543 & 0 & 2291 & 2774 & {\bf 3642$\pm$54}\\
\texttt{Eps-1} & 1652 & 1620 & 2282 & 2360 & {\bf 3724$\pm$46}\\
\texttt{Eps-3} & 1632 & 2213 & 1892 & 2892 & {\bf 3535$\pm$91}\\
\texttt{Gauss-1} & 1599 & 1825 & \underline{2255} & 2255 & {\bf 3653$\pm$52}\\
\texttt{Gauss-3} & 1590 & 1720 & 1458 & 2097 & {\bf 3648$\pm$148}\\
\bottomrule
\end{tabular}
\end{adjustbox}
\end{minipage}
\vspace{10pt}
\begin{minipage}{.49\linewidth}
\begin{adjustbox}{width=\columnwidth,center}
\begin{tabular}{l|ccc|c||c}
\toprule
\multicolumn{6}{c}{\bf Environment: HalfCheetah-v2}\\% & \multicolumn{3}{c}{\bf Partially trained policy: 4206} \\
\midrule
Algorithm &
\vtop{\hbox{\strut BCQ}\hbox{\strut\cite{FujimotoBCQ}}} &
\vtop{\hbox{\strut BEAR}\hbox{\strut\cite{KumarBEAR}}} &
\vtop{\hbox{\strut {\scriptsize BRAC}}\hbox{\strut\cite{WuTN19BRAC}}} &
\vtop{\hbox{\strut Best}\hbox{\strut Baseline}}  &
\vtop{\hbox{\strut \fname}\hbox{\strut (Ours)}} 
\\
\midrule
\texttt{Pure} & 5064 & 5325 & 6207 & {\bf 6209} & {\bf 6028$\pm$192}\\
\texttt{Eps-1} & 5693 & 5435 & \underline{\bf 6307} & {\bf 6307} & {5861$\pm$192}\\
\texttt{Eps-3} & 5588 & 5149 & 6263 & {\bf 6359} & {5869$\pm$139}\\
\texttt{Gauss-1} & 5614 & 5394 & \underline{\bf 6323} & {\bf 6323} & {6026$\pm$74}\\
\texttt{Gauss-3} & 5837 & 5329 & \underline{\bf 6400} & {\bf 6400} & {5892$\pm$128}\\
\bottomrule
\end{tabular}
\end{adjustbox}
\end{minipage}
\hspace*{5pt}
\begin{minipage}{.49\linewidth}
\begin{adjustbox}{width=\columnwidth,center}
\begin{tabular}{l|ccc|c||c}
\toprule
\multicolumn{6}{c}{\bf Environment: Walker-v2}\\ %
\midrule
Algorithm &
\vtop{\hbox{\strut BCQ}\hbox{\strut\cite{FujimotoBCQ}}} &
\vtop{\hbox{\strut BEAR}\hbox{\strut\cite{KumarBEAR}}} &
\vtop{\hbox{\strut {\scriptsize BRAC}}\hbox{\strut\cite{WuTN19BRAC}}} &
\vtop{\hbox{\strut Best}\hbox{\strut Baseline}}  &
\vtop{\hbox{\strut \fname}\hbox{\strut (Ours)}} 
\\
\midrule
\texttt{Pure} & 2095 & 2646 & 2694 & 2907 & {\bf 3709$\pm$159}\\
\texttt{Eps-1} & 1921 & 2695 & 3241 & {\bf 3490} & {2899$\pm$588}\\
\texttt{Eps-3} & 1953 & 2608 & \underline{\bf 3255} & {\bf 3255} & {\bf 3186$\pm$92}\\
\texttt{Gauss-1} & 2094 & 2539 & 2893 & 3193 & {\bf 4027$\pm$314}\\
\texttt{Gauss-3} & 1734 & 2194 & \underline{\bf 3368} & {\bf 3368} & {\bf 2828$\pm$589}\\
\bottomrule
\end{tabular}
\end{adjustbox}
\end{minipage}\vspace*{-7mm}
\label{table:benchmark_comparisons}
\end{table}

\paragraph{Comparison of~\fname's performance with prior work}
\label{ssec:otherExploration}
We compare results of~\fname~with prior SOTA algorithms like BCQ, BEAR, and all variants of BRAC. The results are summarized in Table~\ref{table:all_results_1}. For fairness of comparison, we reproduce results from prior work and do not run the algorithms ourselves. We provide a more expansive table with additional baseline algorithms in the appendix. Our algorithm, \fname, achives SOTA results in $12$ out of the $20$ environment-dataset combinations, overlaps in error bars for $3$ other combinations, and is competitive in the remaining cases. In contrast, the next best approach (a variant of BRAC) achieves SOTA results in only $5$ out of $20$ configurations.

\paragraph{Comparison of~\fname's performance in the D4RL benchmark suite} The D4RL benchmark suite~\cite{Fu2020D4RLDF} for offline RL was introduced in concurrent work. We also study the performance of~\fname~in this benchmark suite. We find that~\fname~achieves the highest (normalized) score in $5$ out of $12$ domains studied, while the next best algorithm~(CQL) achieves the highest score in only $3$ out of $12$ domains. Furthermore, we observe that~\fname~is often very competitive with the best performing algorithm in any given domain even if it doesn't achieve the top score. However, in many domains,~\fname~significantly improves over the state of the art (e.g. hopper-medium-replay and hopper-random). To aggregate results across multiple domains, we consider the average of the normalized scores as a proxy, and observe that~\fname~significantly outperforms prior algorithms. 

%\vspace*{-10pt}
\begin{table*}[h!]
\label{table:d4rl_results}
\small
\caption{Results of various algorithms on the D4RL benchmark suite. Each number is the normalized score computed as (score $-$ random policy score) $/$ (expert policy score $-$ random policy score). The raw score for~\fname~was taken to be the average over the last $100$ iterations of policy learning averaged over $3$ random seeds. Results of MOPO~\cite{MOPO} and CQL~\cite{Kumar2020CQL} are reported from their respective papers. Remaining results are taken from the D4RL benchmark suite white-paper~\cite{Fu2020D4RLDF}. \\ }
%\hspace*{-10pt}
\begin{adjustbox}{max width=\textwidth}
\begin{tabular}{@{}|l|l|c|c|c|c|c|c|c|@{}}
\toprule
\multicolumn{1}{|c|}{\textbf{Dataset}} & \multicolumn{1}{c|}{\textbf{Environment}} & \textbf{\begin{tabular}[c]{@{}c@{}}MOReL\\ (Ours)\end{tabular}} & \textbf{MOPO} & \textbf{CQL}  & \textbf{SAC-Off} & \textbf{BEAR} & \textbf{BRAC-p} & \textbf{BRAC-v} \\ \midrule
random                               & halfcheetah                              & 25.6                                                            & 34.4          & \textbf{35.4} & 30.5             & 25.1          & 24.1            & 31.2            \\
random                               & hopper                                   & \textbf{53.6}                                                   & 11.7          & 10.8          & 11.3             & 11.4          & 11              & 12.2            \\
random                               & walker2d                                 & \textbf{37.3}                                                   & 13.6          & 7             & 4.1              & 7.3           & -0.2            & 1.9             \\
medium                               & halfcheetah                              & 42.1                                                            & 42.3          & 44.4          & -4.3             & 41.7          & 43.8            & \textbf{46.3}   \\
medium                               & hopper                                   & \textbf{95.4}                                                   & 28.0          & 86.6          & 0.8              & 52.1          & 32.7            & 31.1            \\
medium                               & walker2d                                 & 77.8                                                            & 17.8          & 74.5          & 0.9              & 59.1          & 77.5            & \textbf{81.1}   \\
medium-replay                        & halfcheetah                              & 40.2                                                            & \textbf{53.1} & 46.2          & -2.4             & 38.6          & 45.4            & 47.7            \\
medium-replay                        & hopper                                   & \textbf{93.6}                                                   & 67.5          & 48.6          & 3.5              & 33.7          & 0.6             & 0.6             \\
medium-replay                        & walker2d                                 & \textbf{49.8}                                                   & 39.0          & 32.6          & 1.9              & 19.2          & -0.3            & 0.9             \\
medium-expert                        & halfcheetah                              & 53.3                                                            & \textbf{63.3} & 62.4          & 1.8              & 53.4          & 44.2            & 41.9            \\
medium-expert                        & hopper                                   & 108.7                                                           & 23.7          & \textbf{111}  & 1.6              & 96.3          & 1.9             & 0.8             \\
medium-expert                        & walker2d                                 & 95.6                                                            & 44.6          & \textbf{98.7} & -0.1             & 40.1          & 76.9            & 81.6            \\ \midrule
Average                              & Average                                  & \textbf{64.42}                                                  & 36.58         & 54.85         & 4.13             & 39.83         & 29.80           & 31.44           \\ \bottomrule
\end{tabular}
\end{adjustbox}
\end{table*}

% ================================

\clearpage
\textbf{Importance of Pessimistic MDP} 
\label{ssec:ablation}
To highlight the importance of P-MDP, we again consider the \texttt{Pure-partial} dataset outlined above. We compare~\fname~with a nai\"ve MBRL approach that first learns a dynamics model using the offline data, followed by running model-based NPG without any safeguards against model inaccuracy. The results are summarized in Figure~\ref{fig:morel_vs_naive}. We observe that the nai\"ve MBRL approach already works well, achieving results comparable to prior algorithms like BCQ and BEAR. However,~\fname~clearly exhibits more stable and monotonic learning progress. This is particularly evident in \hopper,~\hc, and \walker, where an uncoordinated set of actions can result in the agent falling over. Furthermore, in the case of nai\"ve MBRL, we observe that performance can quickly degrade after a few hundred steps of policy improvement, such as in case of \hopper,~\hc~and~\walker. This suggests that the learned model is being over-exploited. In contrast, with~\fname, we observe that the learning curve is stable and nearly monotonic even after many steps of policy improvement. 
\begin{figure}[t!]
	\includegraphics[width=\textwidth]{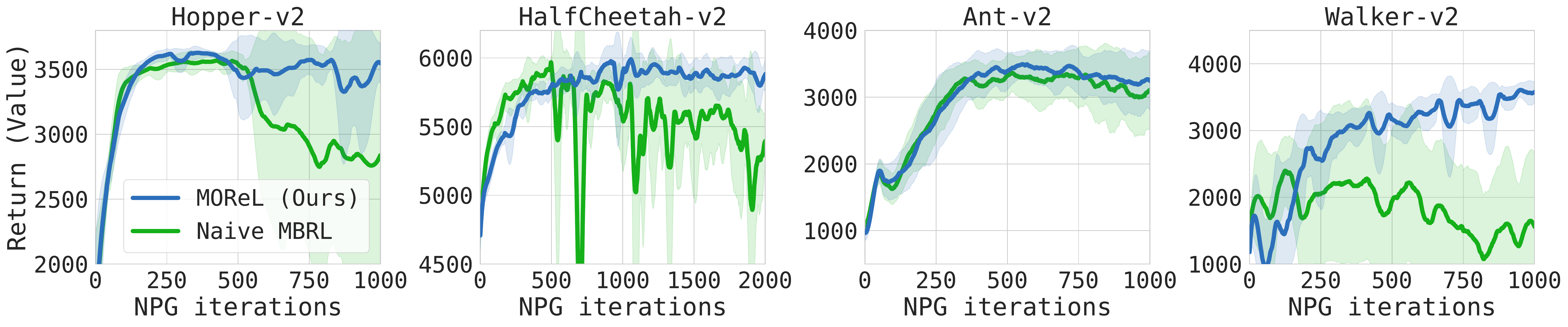}
	\caption{\fname~and Naive MBRL learning curves. The x-axis plots the number of model-based NPG iterations, while y axis plots the return (value) in the real environment. The naive MBRL algorithm is highly unstable while~\fname~leads to stable and near-monotonic learning. Notice however that even naive MBRL learns a policy that performs often as well as the best model-free offline RL algorithms.}
	\label{fig:morel_vs_naive} 
\end{figure}

% ================================

\paragraph{Quality of logging policy}
\label{ssec:loggingPolicyDependence}
\begin{wraptable}{r}{0.55\linewidth}%

	\caption{Value of the policy learned by~\fname~(5 random seeds) when working with a dataset collected with a random (untrained) policy (\texttt{Pure-random}) and a partially trained policy (\texttt{Pure-partial}). 
	}
	\label{table:RandomVsSubopt}
	\begin{adjustbox}{width=0.55\columnwidth,center}
	\begin{tabular}{l | c | c}
		\toprule
		Environment & \texttt{Pure-random} & \texttt{Pure-partial}\\[1ex]\midrule
		\hopper& $2354\pm443$ & $3642\pm54$\\[0.5ex]
		\hc& $2698\pm230$ &  $6028\pm192$\\[0.5ex]
		\walker& $1290\pm325$ & $3709\pm159$\\[0.5ex]
		\ant& $1001\pm3$ & $3663\pm247$\\[1ex]
		\bottomrule
	\end{tabular}
	\end{adjustbox}
	\vspace*{-10pt}
\end{wraptable}
Section~\ref{ssec:theory} indicates that it is not possible for any offline RL algorithm to learn a near-optimal policy when faced with support mismatch between the dataset and optimal policy. To verify this experimentally for~\fname, we consider two datasets (of the same size) collected using the \texttt{Pure} strategy. The first uses a partially trained policy $\pi_p$ (called \texttt{Pure-partial}), which is the same as the \texttt{Pure} dataset studied in Table~\ref{table:benchmark_comparisons}. The second dataset is collected using an untrained random Gaussian policy (called \texttt{Pure-random}). Table~\ref{table:RandomVsSubopt} compares the results of~\fname~using these two datasets.
We observe that the value of policy learned with \texttt{Pure-partial} dataset far exceeds the value with the \texttt{Pure-random} dataset. Thus, the value of policy used for data logging plays a crucial role in the performance achievable with offline RL. \\[0.2cm]

% ================================

\begin{figure}[t!]
	\includegraphics[width=\textwidth]{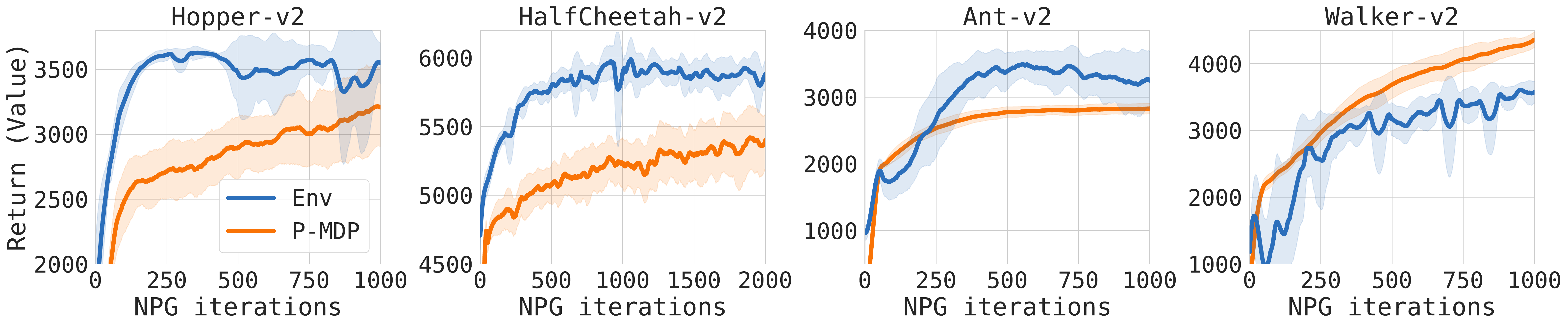}
	\caption{Learning curve using the \texttt{Pure-partial} dataset, see paper text for details. The policy is learned using the pessimistic MDP (P-MDP), and we plot the performance in both the P-MDP and the real environment over the course of learning. We observe that the performance in the P-MDP closely tracks the true performance and never substantially exceeds it, as predicted in section~\ref{ssec:theory}. This shows that the policy value in the P-MDP serves as a good surrogate for the purposes of offline policy evaluation and learning.}
	\label{fig:pessimism_lowerbound} 
\end{figure}

\textbf{Transfer from P-MDP to environment} \\[0.2cm]
Finally, we study how the learning progress in P-MDP relates to the progress in the environment. Our theoretical results (Theorem~1) suggest that the value of a policy in the P-MDP cannot substantially exceed the value in the environment. This makes the value in the P-MDP an approximate lower bound on the true performance, and a good surrogate for optimization.  In Figure~\ref{fig:pessimism_lowerbound}, we plot the value or return of the policy in the P-MDP and environment over the course of learning. Note that the policy is being learned in the P-MDP, and as a result we observe a clear monotonic learning curve for value in the P-MDP, consistent with the monotonic improvement theory of policy gradient methods~\cite{Kakade2002CPI, SchulmanTRPO}. We observe that the value in the true environment closely correlates with the value in P-MDP. In particular, the P-MDP value never substantially exceeds the true performance, suggesting that the pessimism helps to avoid model exploitation.

\section{Conclusions}\label{sec:discussion}

We introduced~\fname, a new model-based framework for offline RL. \fname~incorporates both \emph{generalization} and \emph{pessimism} (or conservatism). This enables \fname~to perform policy improvement in {known states} that may not directly occur in the static offline dataset, but can nevertheless be predicted using the dataset by leveraging the power of generalization. At the same time, due to the use of pessimism, \fname~ensures that the agent does not drift to unknown states where the agent cannot predict accurately using the static dataset.

Theoretically, we obtain bounds on the suboptimality of~\fname~which improve over those in prior work. We further showed that this suboptimality bound cannot be improved upon by \emph{any} offline RL algorithm in the worst case. Experimentally, we evaluated~\fname~in the standard continuous control benchmarks in OpenAI gym and showed that it achieves state of the art results. The modular structure of~\fname~comprising of model learning, uncertainty estimation, and model-based planning allows the use of a variety of approaches such as multi-step prediction for model learning, abstention for uncertainty estimation, or model-predictive control for action selection. In future work, we hope to explore these directions.

\section*{Acknowledgements}
The authors thank Prof. Emo Todorov for generously providing the MuJoCo simulator for use in this paper. Rahul Kidambi thanks Mohammad Ghavamzadeh and Rasool Fakoor for pointers to related works and other valuable discussions/pointers about offline RL. Aravind Rajeswaran thanks Profs.~Sham Kakade and Emo Todorov for valuable discussions. The authors also thank Prof. Nan Jiang and Anirudh Vemula for pointers to related work. Rahul Kidambi acknowledges funding from NSF Award $\text{CCF}-1740822$ and computing resources from the Cornell ``Graphite'' cluster. Part of this work was completed when Aravind held dual affiliations with the University of Washington and Google Brain. Aravind acknowledges financial support through the JP Morgan PhD Fellowship in AI. Thorsten Joachims acknowledges funding from NSF Award $\text{IIS}-1901168$. All content represents the opinion of the authors, which is not necessarily shared or endorsed by their respective employers and/or sponsors.

\section*{Broader Impact}

This paper studies offline RL, which allows for data driven policy learning using pre-collected datasets. The ability to train policies offline can expand the range of applications where RL can be applied as well as the sample efficiency of any downstream online learning. Since the dataset has already been collected, offline RL enables us to abstract away the exploration or data collection challenge. Safe exploration is crucial for applications like robotics and healthcare, where poorly designed exploratory actions can have harmful physical consequences. Avoiding online exploration by an autonomous agent, and working with a safely collected dataset, can have the broader impact of alleviating safety challenges in RL. That said, the impact of RL agents to the society at large is highly dependent on the design of the reward function. If the reward function is designed by malicious actors, any RL agent, be it offline or not, can present negative consequences. Therefore, the design of reward functions requires checks, vetting, and scrutiny to ensure RL algorithms are aligned with societal norms.

\iffalse
Authors are required to include a statement of the broader impact of their work, including its ethical aspects and future societal consequences. 
Authors should discuss both positive and negative outcomes, if any. For instance, authors should discuss a) 
who may benefit from this research, b) who may be put at disadvantage from this research, c) what are the consequences of failure of the system, and d) whether the task/method leverages
biases in the data. If authors believe this is not applicable to them, authors can simply state this.

Use unnumbered first level headings for this section, which should go at the end of the paper. {\bf Note that this section does not count towards the eight pages of content that are allowed.}

\begin{ack}
Use unnumbered first level headings for the acknowledgments. All acknowledgments
go at the end of the paper before the list of references. Moreover, you are required to declare 
funding (financial activities supporting the submitted work) and competing interests (related financial activities outside the submitted work). 
More information about this disclosure can be found at: \url{https://neurips.cc/Conferences/2020/PaperInformation/FundingDisclosure}.

Do {\bf not} include this section in the anonymized submission, only in the final paper. You can use the \texttt{ack} environment provided in the style file to autmoatically hide this section in the anonymized submission.
\end{ack}
\fi
\bibliography{references}
\bibliographystyle{unsrtnat}

\clearpage
\appendix

\section{Theoretical Results: Proofs For Section~\ref{ssec:theory}}\label{app:proofs}
In this section, we present the proofs of our main results Theorem~\ref{thm:pess-world} and Proposition~\ref{prop:counter}.
\begin{proof}[Proof of Theorem~\ref{thm:pess-world}]
We wish to show the following two inequalities.
\begin{align*}
    J_{\hat{\rho_0}}(\pi, \Mp) &\geq J_{\rho_0}(\pi, \M) - \frac{2\Rmax}{1-\gamma} \cdot D_{TV}(\rho_0, \hat{\rho_0}) - \frac{2 \gamma  \Rmax}{(1-\gamma)^2} \cdot \alpha - \frac{2 \Rmax}{1-\gamma} \cdot \E{\gamma^{T_{\U}^\pi}}, \mbox{ and } \\
    J_{\hat{\rho_0}}(\pi, \Mp) &\leq J_{\rho_0}(\pi, \M) + \frac{2\Rmax}{1-\gamma} \cdot D_{TV}(\rho_0, \hat{\rho_0}) + \frac{2 \gamma \Rmax}{(1-\gamma)^2}\cdot \alpha.
\end{align*}

The proof of this theorem is inspired by the simulation lemma of~\cite{KearnsSingh2002}, with some additional modifications due to pessimism, and goes through the pessimistic MDP $\Mpe$, which is the same as $\Mp$ except that the starting state distribution is $\rho_0$ instead of $\hat{\rho}_0$ and the transition probability from a known state-action pair $(s,a)$ is $P(s'|s,a)$ instead of $\widehat{P}(s'|s,a)$. More concretely, $\Mpe$ is described by $\{S\cup\textrm{HALT}, A, r_p, {P}_p, {\rho}_0, \gamma\}$, where HALT is an additional absorbing state we introduce similar to what we did for $\Mp$. The modified reward and transition dynamics are given by:
\begin{equation*}
\begin{split}
    {P}_{p}(s'|s,a) = 
    \begin{cases}
    \delta(s'=\mathrm{HALT}) & 
    \begin{array}{l}
            \mathrm{if} \  U^\alpha(s,a) = \mathrm{TRUE}   \\
          \mathrm{or} \ \ s = \mathrm{HALT}
    \end{array} \\
    {P}(s'|s,a) & \mbox{\ \ otherwise}.
    \end{cases}
\end{split}
\quad
\begin{split}
    r_p(s,a) = 
    \begin{cases}
    -\nrew & \mathrm{if} \  s = \mathrm{HALT} \\
    r(s,a) & \mbox{otherwise}
    \end{cases}
\end{split}
\end{equation*}

We first show that
\begin{align*}
    J_{\hat{\rho_0}}(\pi,\Mp) &\geq J_{{\rho_0}}(\pi,\Mpe) - \frac{2\Rmax}{1-\gamma} \cdot D_{TV}(\rho_0, \hat{\rho_0}) - \frac{2 \gamma  \Rmax}{(1-\gamma)^2} \cdot \alpha, \mbox{ and } \\
    J_{\hat{\rho_0}}(\pi,\Mp) &\leq J_{\rho_0}(\pi,\Mpe) + \frac{2\Rmax}{1-\gamma} \cdot D_{TV}(\rho_0, \hat{\rho_0}) + \frac{2 \gamma  \Rmax}{(1-\gamma)^2} \cdot \alpha,
\end{align*}
The main idea is to couple the evolutions of any given policy on the pessimistic MDP $\Mpe$ and the model-based pessimistic MDP $\Mp$ so that $(s_{t-1},a_{t-1}) \defeq (s_{t-1}^{\Mpe}, a_{t-1}^{\Mpe}) = (s_{t-1}^{\Mp}, a_{t-1}^{\Mp})$. 

Assuming that such a coupling can be performed in the first step, since $\norm{P(s,a) - \hat{P}(s,a)}_1 \leq \alpha$, this coupling can be performed at each subsequent step with probability $1-\alpha$. The probability that the coupling is not valid at time $t$ is at most $1 - (1-\alpha)^t$. So the total difference in the values of the policy $\pi$ on the two MDPs can be upper bounded as:
\begin{align*}
    \abs{J_{\hat{\rho_0}}(\pi,\Mp) - J_{\rho_0}(\pi,\Mpe)} &\leq \frac{2 R_{\max}}{1-\gamma} \cdot D_{TV}(\rho_0, \hat{\rho_0}) + \sum_t \gamma^t \left( 1 - (1-\alpha)^t \right) \cdot 2\cdot \Rmax \\
    &\leq \frac{2\Rmax}{1-\gamma} \cdot D_{TV}(\rho_0, \hat{\rho_0}) + \frac{2 \gamma  \Rmax}{(1-\gamma)^2} \cdot \alpha.
\end{align*}

We now argue that
\begin{align*}
    J_{\rho_0}(\pi,\Mpe) &\geq J_{\rho_0}(\pi,\M) - \frac{2 \Rmax}{1-\gamma} \cdot \E{\gamma^{T_u^\pi}}, \mbox{ and } \\
    J_{\rho_0}(\pi,\Mpe) &\leq J_{\rho_0}(\pi,\M).
\end{align*}
For the first part, we see that the evolution of any policy $\pi$ on the pessimistic MDP $\Mpe$, can be coupled with the evolution of $\pi$ on the actual MDP $\M$ until $\pi$ encounters an unknown state. From this point, the total rewards obtained on the pessimistic MDP $\Mpe$ will be $\frac{-R_{\max}}{1-\gamma}$, while the maximum total reward obtained by $\pi$ on $\M$ from that point on is $\frac{\Rmax}{1-\gamma}$. Multiplying by the discount factor $\E{\gamma^{T_u^\pi}}$ proves the first part.

For the second part, consider any policy $\pi$ and let it evolve on the MDP $\M$ as $\left(s,a,s'_{\M}\right)$. Simulate an evolution of the same policy $\pi$ on $\Mpe$, $\left(s,a,s'_{\Mpe}\right)$, as follows: if $(s,a) \in SA_k$, then $s'_{\Mpe} = s'_{\M}$ and if $(s,a) \in \U$, then $s'_{\Mpe} = \textrm{HALT}$. We see that the rewards obtained by $\pi$ on each transition in $\Mpe$ is less than or equal to that obtained by $\pi$ on the same transition in $\M$. This proves the second part of the lemma.
\end{proof}

% \begin{proof}[Proof of Corollary~\ref{cor:subopt}]
% By the suboptimality assumption on the planning algorithm and Theorem~\ref{thm:pess-world}, we have
% \begin{align*}
%     J_{\rho_0}(\pi_{\textrm{out}}, \M) &\geq J_{\hat{\rho_0}}(\pi_{\textrm{out}}, \Mp) - \frac{2 \Rmax}{1-\gamma} \cdot D_{TV}(\rho_0, \hat{\rho_0}) - \frac{2\gamma \Rmax}{(1-\gamma)^2} \cdot \alpha \\
%     &\geq J_{\hat{\rho_0}}(\pi^*, \Mp) - \epsilon_{\pi} - \frac{2 \Rmax}{1-\gamma} \cdot D_{TV}(\rho_0, \hat{\rho_0}) - \frac{2\gamma \Rmax}{(1-\gamma)^2} \cdot \alpha \\
%     &\geq J_{{\rho_0}}(\pi^*, \M) - \epsilon_{\pi} - \frac{4 \Rmax}{1-\gamma} \cdot D_{TV}(\rho_0, \hat{\rho_0}) - \frac{4\gamma \Rmax}{(1-\gamma)^2} \cdot \alpha - \frac{2 \Rmax}{1-\gamma} \cdot \E{\gamma^{T_{\U}^{\pi^*}}}.
%     \end{align*}
%     This proves the result.
% \end{proof}

\begin{lemma}\label{lem:disc}(Hitting time and visitation distribution)
For any set $\mathcal{S} \subseteq S \times A$, and any policy $\pi$, we have
$\E{\gamma^{T_{\mathcal{S}}^{\pi}}} \leq \frac{1}{1-\gamma} \cdot d^{\pi,\M}(\mathcal{S})$.
\end{lemma}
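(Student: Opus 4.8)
The plan is to rewrite both sides as expectations over the trajectory induced by $\pi$ on $\M$ started from $\rho_0$, and then compare the two integrands pointwise along each sample path. First I would expand the right-hand side using the definition of the visitation distribution: summing $d^{\pi,\M}(s,a) \defeq (1-\gamma)\sum_{t=0}^\infty\gamma^t P(s_t=s,a_t=a \mid s_0\sim\rho_0,\pi,\M)$ over the set $\mathcal{S}$ gives $d^{\pi,\M}(\mathcal{S}) = (1-\gamma)\sum_{t=0}^\infty \gamma^t\, P\!\left((s_t,a_t)\in\mathcal{S}\right)$, so that $\tfrac{1}{1-\gamma}\,d^{\pi,\M}(\mathcal{S}) = \sum_{t=0}^\infty \gamma^t\, P\!\left((s_t,a_t)\in\mathcal{S}\right) = \E{\sum_{t=0}^\infty \gamma^t\, \ind{(s_t,a_t)\in\mathcal{S}}}$, where I have exchanged the (nonnegative) sum and the expectation by Tonelli's theorem.

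The key step is a deterministic, path-by-path lower bound. Fix a realized trajectory $(s_0,a_0,s_1,a_1,\dots)$ and let $T_{\mathcal{S}}^{\pi}$ be its hitting time of $\mathcal{S}$, adopting the convention $\gamma^{\infty}=0$ since $\gamma<1$. If the path ever enters $\mathcal{S}$, then the single summand at $t=T_{\mathcal{S}}^{\pi}$ equals $\gamma^{T_{\mathcal{S}}^{\pi}}\cdot 1$ and every other summand is nonnegative, so $\sum_{t=0}^\infty \gamma^t\, \ind{(s_t,a_t)\in\mathcal{S}} \geq \gamma^{T_{\mathcal{S}}^{\pi}}$; if the path never enters $\mathcal{S}$, the left side is $0 = \gamma^{\infty} = \gamma^{T_{\mathcal{S}}^{\pi}}$, so the inequality holds with equality. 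Thus the pointwise bound $\sum_{t=0}^\infty \gamma^t\, \ind{(s_t,a_t)\in\mathcal{S}} \geq \gamma^{T_{\mathcal{S}}^{\pi}}$ is valid on every sample path.

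Finally, I would take expectations of this pointwise inequality, using monotonicity of expectation, to conclude $\tfrac{1}{1-\gamma}\,d^{\pi,\M}(\mathcal{S}) = \E{\sum_{t=0}^\infty \gamma^t\, \ind{(s_t,a_t)\in\mathcal{S}}} \geq \E{\gamma^{T_{\mathcal{S}}^{\pi}}}$, which rearranges to the claim. I do not expect any serious obstacle here; the only point requiring a little care is the bookkeeping at $T_{\mathcal{S}}^{\pi}=\infty$ (handled cleanly by the convention $\gamma^{\infty}=0$) and the justification for swapping sum and expectation (immediate from nonnegativity of the discounted indicators). The essential content is simply that the expected discounted occupancy of $\mathcal{S}$ dominates the expected discount factor at first entry, because the first visit alone already contributes $\gamma^{T_{\mathcal{S}}^{\pi}}$ to the occupancy sum.
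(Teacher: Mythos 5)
Your proof is correct, and it reaches the same endpoint as the paper's, but by a somewhat different decomposition. The paper first splits the set into singletons: since $T_{\mathcal{S}}^{\pi} = \min_{(s,a)\in\mathcal{S}} T_{(s,a)}^{\pi}$, it applies the union-type bound $\E{\gamma^{T_{\mathcal{S}}^{\pi}}} \leq \sum_{(s,a)\in\mathcal{S}} \E{\gamma^{T_{(s,a)}^{\pi}}}$, and then controls each per-pair term via $\E{\gamma^{T_{(s,a)}^{\pi}}} \leq \sum_{t\geq 0}\gamma^t P(s_t=s,a_t=a)$, using that the event $\{T_{(s,a)}^{\pi}=t\}$ is contained in the event of visiting $(s,a)$ at time $t$; summing over pairs then recovers $\tfrac{1}{1-\gamma}\,d^{\pi,\M}(\mathcal{S})$. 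You skip the decomposition into singletons entirely and prove the single pointwise inequality $\gamma^{T_{\mathcal{S}}^{\pi}} \leq \sum_{t\geq 0}\gamma^t \ind{(s_t,a_t)\in\mathcal{S}}$ on every sample path, then take expectations. The underlying observation is identical in both arguments --- the first visit alone already contributes $\gamma^{T}$ to the discounted occupancy --- so the two proofs are close cousins rather than genuinely distinct strategies. Still, your route buys a few small things: it uses one inequality instead of two; it is explicit about the $T_{\mathcal{S}}^{\pi}=\infty$ convention and the sum--expectation interchange (Tonelli), which the paper glosses over; and because it never sums over the elements of $\mathcal{S}$, it applies verbatim to uncountable state-action sets, whereas the paper's bound $\sum_{(s,a)\in\mathcal{S}}\E{\gamma^{T_{(s,a)}^{\pi}}}$ implicitly presumes $\mathcal{S}$ is countable.
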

\begin{proof}[Proof of Lemma~\ref{lem:disc}]
The proof is rather straightforward. We have
\begin{align*}
    \E{\gamma^{T_{\U}^{\pi}}} &\leq \sum_{(s',a')\in \U} \E{\gamma^{T_{(s',a')}^{\pi}}} \leq \sum_{(s',a')\in \U}\sum_{t=0}^{\infty}{\gamma^t  P(s_t=s',a_t=a'|s_0\sim \rho_0,\pi,\M)} \\
    &= \frac{1}{1-\gamma} \sum_{(s',a')\in \U} d^{\pi, \M}(s',a') = \frac{1}{1-\gamma} \cdot d^{\pi, \M}(\U).
\end{align*}
\end{proof}

\begin{figure}[t]
    \centering
    \includegraphics[scale=0.5]{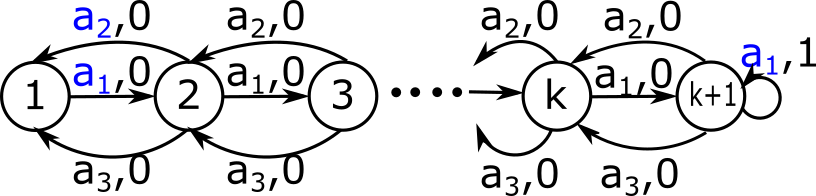}
    \caption{This example shows that the suboptimality of any offline RL algorithm is at least $\frac{\Rmax}{4(1-\gamma)^2} \times \frac{d^{\pi^*,\M}(\U_D)}{\log \frac{1}{1-\gamma}}$ in the worst case and hence Corollary~\ref{cor:subopt} is tight. The states $1, 2, \cdots, k+1$ in the MDP are depicted under the circles. The actions $a_1,a_2,a_3$, rewards and transitions are depicted on the arrows connecting the states. The actions taken by the behavior (i.e. the data collection) policy are depicted in {\color{blue} blue}. See Proposition~\ref{prop:counter} and its proof for more details.}
    \label{fig:counterexample}
\end{figure}

\begin{proof}[Proof of Proposition~\ref{prop:counter}]
We consider the MDP in Figure~\ref{fig:counterexample}, where we set $k=10 \log \frac{1}{1-\gamma}$.
The MDP has $k+1$ states, with three actions $a_1, a_2$ and $a_3$ at each state. The rewards (shown on the transition arrows) are all $0$ except for the action $a_1$ taken in state $k+1$, in which case it is $1$. Note that the rewards can be scaled by $\Rmax$ but for simplicity, we consider the setting with $\Rmax = 1$. It is clear that the optimal policy $\pi^*$ is to take the action $a_1$ in all the states. The starting state distribution $\rho_0$ is state $1$ with probability $p_0 \defeq \frac{\epsilon}{(1-\gamma)\log \frac{1}{1-\gamma}}$ and state $k+1$ with probability $1-p_0$. The actions taken by the data collection policy are shown in {\color{blue} blue}. Since the dataset consists only of (state, action, reward, next state) pairs $(1,a_1,0,2), (2,a_2, 0, 1)$ and $(k+1,a_1,1,k+1)$ we see that $\U_D = (S \times A) \setminus \set{(1,a_1),(2,a_2), (k+1,a_1)}$ and $d^{\pi^*,\M}(\U_D) = (1-\gamma) \cdot \sum_{t=1}^{k-1} \gamma^t \cdot p_0 \leq (1-\gamma) \cdot (k-1) \cdot p_0 \leq \epsilon$ proving the first claim.
Since none of the states and actions in $\U_D$ are seen in the dataset, after permuting the actions if necessary, the expected time taken by any policy learned from the dataset, to reach state $k+1$ starting from state $1$ is at least $\exp\left(k/5\right) \geq (1-\gamma)^{-2}$. So, the value of any policy $\hat{\pi}$ learned from the dataset is at most $\frac{1-p_0}{1-\gamma} + \frac{p_0 \cdot \gamma^{(1-\gamma)^{-2}}}{1-\gamma} = \frac{1}{1-\gamma} - p_0 \cdot \frac{1-\gamma^{(1-\gamma)^{-2}}}{1-\gamma} \leq \frac{1}{1-\gamma} - \frac{3p_0}{4(1-\gamma)}$, where we used $\gamma \in [0.95,1)$ in the last step.
On the other hand, the value of $\pi^*$ is at least $\frac{1-p_0}{1-\gamma} + p_0 \cdot \left(\frac{1}{1-\gamma} - k\right) $. So the suboptimality of any learned policy is at least $p_0 \cdot \left( \frac{3}{4(1-\gamma)} - k\right) = p_0 \cdot \left( \frac{3}{4(1-\gamma)} - 10 \log \frac{1}{1-\gamma}\right) \geq \frac{p_0}{4 (1-\gamma)}$, where we again used $\gamma \in [0.95,1)$ in the last step. Substituting the value of $p_0$ proves the proposition.
\end{proof}
\begin{proof}[Proof of Lemma~\ref{lem:asymptotic}]
We first note that the empirical starting distribution $\hat{\rho_0}$ satisfies $D_{TV}(\rho_0, \hat{\rho_0}) \leq \frac{C}{\rho_{0,\textrm{min}}} \cdot \sqrt{\frac{\log \frac{1}{\delta\rho_{0,\textrm{min}}}}{n}}$, for a large enough constant $C$. This is because for each state $s$ in the support of $\rho_0$, its empirical frequency in $\data$ satisfies:
\begin{align*}
    \abs{\hat{\rho_0}(s) - \rho_0(s)} \leq C \sqrt{\frac{\log \frac{1}{\delta\rho_{0,\textrm{min}}}}{n}},
\end{align*}
with probability at least $1-\delta\rho_{0,\textrm{min}}$ using Chernoff's bound, where $C$ is an absolute numerical constant. Using union bound over at most $\frac{1}{\rho_{0,\textrm{min}}}$ states in the support of $\rho_0$, we see that with probability at least $1-\delta$, we have $D_{TV}(\rho_0, \hat{\rho_0}) \leq \frac{C}{\rho_{0,\textrm{min}}} \cdot \sqrt{\frac{\log \frac{1}{\delta\rho_{0,\textrm{min}}}}{n}}$.

Similarly, for any state action pair $(s,a)$, denoting $n_{(s,a)}$ as the number of times $(s,a)$ appears in $\data$, we have that:
\begin{align*}
    \frac{n_{(s,a)}}{n} - d^{\pi_b,\M}(s,a) \geq -C\sqrt{\frac{\log \frac{1}{\delta d^{\pi_b}_\textrm{min}}}{n}},
\end{align*}
with probability at least $1- {d^{\pi_b}_\textrm{min}\delta}{}$. Again using a union bound over all state-action pairs in the support of $d^{\pi_b,\M}(\cdot)$, we see that:
\begin{align*}
    n_{(s,a)} \geq d^{\pi_b}_\textrm{min} \cdot n - C\sqrt{n \cdot {\log \frac{1}{\delta d^{\pi_b}_\textrm{min}}}{}},
\end{align*}
for every $(s,a)$ in the support of $d^{\pi_b,\M}(\cdot)$ with probability at least $1-\delta$. The asssumption on the size of $n$ then implies that $n_{(s,a)} \geq \frac{d^{\pi_b}_\textrm{min} \cdot n}{2}$. Using a similar Chernoff bound argument, we see that $D_{TV}(P(\cdot|s,a), \hat{P}(\cdot|s,a)) \leq \frac{C}{p_\textrm{min}} \cdot \sqrt{\frac{\log \frac{1}{\delta p_\textrm{min} d^{\pi_b}_\textrm{min}}}{n_{(s,a)}}}$ for every $(s,a)$ in the support of $d^{\pi_b,\M}(\cdot)$ with probability at least $1-\delta$. By choosing $\alpha = \frac{C}{p_\textrm{min}} \cdot \sqrt{\frac{\log \frac{1}{\delta p_\textrm{min} d^{\pi_b}_\textrm{min}}}{n_{(s,a)}}}$, we see that $\U \cap \textrm{Supp}(d^{\pi_b,\M}) = \emptyset$ and hence $T_\U^{\pi_b}=\infty$.
By Theorem~\ref{thm:pess-world}, we have that for any policy $\pi$, we have:
\begin{align*}
    J_{\rho_0}(\pi_{\textrm{out}}, \M) &\geq J_{\hat{\rho_0}}(\pi_{\textrm{out}}, \Mp) - \frac{2 \Rmax}{1-\gamma} \cdot D_{TV}(\rho_0, \hat{\rho_0}) - \frac{2\gamma \Rmax}{(1-\gamma)^2} \cdot \alpha \\
    &\geq J_{\hat{\rho_0}}(\pi, \Mp) - \epsilon_{\pi} - \frac{2 \Rmax}{1-\gamma} \cdot D_{TV}(\rho_0, \hat{\rho_0}) - \frac{2\gamma \Rmax}{(1-\gamma)^2} \cdot \alpha \\
    &\geq J_{{\rho_0}}(\pi, \M) - \epsilon_{\pi} - \frac{4 \Rmax}{1-\gamma} \cdot D_{TV}(\rho_0, \hat{\rho_0}) - \frac{4\gamma \Rmax}{(1-\gamma)^2} \cdot \alpha - \frac{2 \Rmax}{1-\gamma} \cdot \E{\gamma^{T_{\U}^{\pi}}}.
\end{align*}
Plugging $\pi = \pi_b$ gives us the first assertion and plugging $\pi = \pi^*$ and using Lemma~\ref{lem:disc} gives us the second assertion.
\end{proof}
\vspace*{-10pt}
\section{Detailed Related Work}\label{app:related}
Our work takes a model-based approach to offline RL. We review related work pertaining to both of these domains in this section.
\vspace*{-5pt}
\subsection{Offline RL}
Offline RL dates at least to the work of Lange et al.~\citep{LangeGR12}. In this setting, an RL agent is provided access to a typically large offline dataset, using which it has to produce a highly rewarding policy. This has direct applications in fields like healthcare~\citep{Gottesman18,WangZHZ18,YuR019}, recommendation systems~\citep{StrehlLK10, SwaminathanJ15, CovingtonAS16, ChenBCJBC18}, dialogue systems~\citep{ZhouSRE17,JaquesGHSFLJGP19,Karampatziakis19}, and autonomous driving~\citep{SallabAPY17}. We refer the readers to the review paper of Levine et al.~\citep{LevineKTF20} for an overview of potential applications. On the algorithmic front, prior work in offline RL can be broadly categorized into three groups as described below.

\paragraph{Importance sampling}
The first approach to offline RL is through importance sampling. In this approach, trajectories from the offline dataset are directly used to estimate the policy gradient, which is subsequently corrected using importance weights. This approach is particularly common in contextual bandits literature~\citep{LiCLW10,StrehlLK10,SwaminathanJ15} where the importance weights are relatively easier to estimate due to the non-sequential nature of the problem. For MDPs, Liu et al.~\citep{LiuSAB19} present an importance sampling based off-policy policy gradient method by estimating state distribution weights~\citep{HallakM17,GeladaB19,NachumDualDICE}. The work of Liu et al.~\citep{LiuSAB19} also utilizes the notion of {\em pessimism} by optimizing only over a subset of states visited by the behavioral policy. They utilize importance weighted policy gradient (with estimated importance weights) to optimize this MDP. However, their work does not naturally capture a notion of generalization over the state space. Moreover, their results require strong assumptions on the data collecting policy in the sense of ensuring support on states visited by the optimal policy. Our framework,~\fname,~provides the same guarantees under identical assumptions, but we also show that the performance of~\fname~degrades gracefully when these assumptions aren't satisfied.

\paragraph{Dynamic programming} The overwhelming majority of recent algorithmic work in offline RL is through the paradigm of approximate dynamic programming. In principle, any off-policy algorithm based on Q-learning~\cite{Watkins89, MnihKSRVBGRFOPB15} or actor-critic architectures~\cite{LillicrapHPHETS15, FujimotoTD3, HaarnojaSAC} can be used with a static offline dataset. However, recent empirical studies confirm that such a direct extension leads to poor results due to the challenges of overestimation bias in generalization and distribution shift. To address overestimation bias, prior work has proposed approaches like ensembles of Q-networks~\cite{AgarwalSN19, FujimotoBCQ, JaquesGHSFLJGP19}. As for distribution shift, the principle approach used is to regularize the learned policy towards the data logging policy~\cite{FujimotoBCQ, KumarBEAR, WuTN19BRAC}. Different regularization schemes, such as those based on KL-divergence and maximum mean discrepancy (MMD), have been considered in the past. Wu et al.~\cite{WuTN19BRAC} perform a comparative study of such regularization schemes and find that they all perform comparably. ADP-based offline RL has also be studied theoretically~\cite{Munos2008FiniteTimeBF, Chen2019InformationTheoreticCI}, with \citet{Chen2019InformationTheoreticCI} providing an information-theoretic lower bound on sample complexity.
However, these works again don't study the impact of support mismatch between logging policy and optimal policy. Finally, a recent line of work~\citep{Nachum19AlgaeDICE,NachumD20} focuses on obtaining provably convergent methods for minimizing the (one-step) Bellman error using Duality theory. While they show promising results in continuous control tasks in the online RL setting, their performance in the offline RL setting is yet to be studied.

\paragraph{Model-based RL} The interplay between model-based methods and offline RL has only been sparsely explored. The work of Ross \& Bagnell~\cite{RossB12} theoretically studied the performance of MBRL in the batch setting. In particular, the algorithm they analyzed involves learning a dynamics model using the offline dataset, and subsequently planning in the learned model without any additional safeguards. Their theoretical results are largely negative for this algorithm, suggesting that in the worst case, this algorithm could have arbitrarily large sub-optimality. In addition, their sub-optimality bounds become pathologically loose when the data logging distribution does not share support with the distribution of the optimal policy. Model-based offline RL methods from a safe policy improvement perspective have also been considered~\citep{GhavamzadehPC16}. In contrast to both these works, we present a novel algorithmic framework that constructs and pessimistic MDP, and show that this is crucial for better empirical results and sharper theoretical analysis.

\subsection{Advances in Model-Based RL}

Since our work utilizes model-based RL, we review the most directly related work in the online RL setting. Classical works in MBRL have focused extensively on tabular MDPs and linear quadratic regulartor~(LQR). For tabular MDPs (in the online RL setting), the first known polynomial time algorithms were the model-based algorithms of $E^3$~\cite{KearnsSingh2002} and R-MAX~\cite{Brafman2001RMAXA}. More recent work suggests that model-based methods are minimax optimal for tabular MDPs when equipped with a wide restart state distribution~\cite{AgarwalKY19}. However, these works critically rely on the tabular nature of the problem. Since each table entry is typically considered to be independent, and updates to any entry to do not affect other entries, tabular MDPs do not afford any notion of generalization. The metric-$E^3$~\citep{KakadeKL03} algorithm aims to overcome this challenge by considering an underlying metric space for state-actions that enables generalization. While this work provides a strong theoretical basis, it does not directly provide a practical algorithm that can be used with function approximation. Our work is perhaps conceptually closest to $E^3$ and metric-$E^3$ which partitions the state space into known and unknown regions. A cornerstone of~\fname~is the P-MDP which partitions the state space into known and unknown regions, as in, $E^3$~\cite{KearnsSingh2002} and R-MAX~\cite{Brafman2001RMAXA}, but these constructions were not developed to encourage pessimism. However, all of these works primarily deal with the standard (online) RL setting. Our work differs in its focus on offline RL, where we show the P-MDP construction plays a crucial role. Moreover, direct practical instantiations of $E^3$ and metric-$E^3$ with function approximation have remained elusive.

In recent years, along with an explosion of interest in deep RL, MBRL has emerged as a powerful class of approaches for sample efficient learning. Modern MBRL methods (typically in the online RL setting) can support the use of flexible function approximators like neural networks, as well as generic priors like smoothness and approximate knowledge of physics~\cite{Zeng2019TossingBotLT}, enabling the learning of accurate models. Furthermore, MBRL can draw upon the rich literature on model-based planning including model predictive control (MPC)~\cite{Todorov2005, tassa2012synthesis, WilliamsMPPI, POLO}, search based planning~\cite{MCTSSurvey, RRT}, dynamic programming~\cite{Munos2008FiniteTimeBF, BertsekasBook}, and policy optimization~\cite{Kakade01, SchulmanTRPO, Rajeswaran17nips, SchulmanPPO, HaarnojaSAC}. These advances in MBRL have enabled highly sample efficient learning in widely studied benchmark tasks~\cite{PETS, JannerFZL19, Wang2020ExploringMP, Wang2019BenchmarkingMBRL, RajeswaranGameMBRL}, as well as in a number of challenging robotic control tasks like aggressive driving~\cite{WilliamsMPPI}, dexterous hand manipulation~\cite{PDDM, RajeswaranGameMBRL}, and quadrupedal locomotion~\cite{Yang2019DataER}. Among these works, the recent work of Rajeswaran et al.~\cite{RajeswaranGameMBRL} demonstrated state of the art results with MBRL in a range of benchmark tasks, and forms the basis for our practical implementation. In particular, our model learning and policy optimization subroutines are extended from the MAL framework in Rajeswaran et al.~\cite{RajeswaranGameMBRL}. However, our work crucially differs from it due to the pessimistic MDP construction, which we show is important for success in the offline RL setting.

% \vspace*{-5pt}
\section{Additional Experimental Details And Setup}\label{app:expts}

\subsection{Environment Details And Setup} 
As mentioned before, following recent efforts in offline RL~\citep{FujimotoBCQ, KumarBEAR, WuTN19BRAC}, we consider four continuous control tasks:~\hopper,~\hc,~\ant,~\walker~from OpenAI gym~\citep{Brockman16} simulated with MuJoCo~\cite{TodorovET12}. As normally done in MBRL literature with OpenAI gym tasks~\citep{KurutachCDTA18, NagabandiMBMF, LuoXLTDM19, RajeswaranGameMBRL}, we reduce the planning horizon for the environments to 400 or 500. Similar to~\citep{LuoXLTDM19, RajeswaranGameMBRL}, we append our state parameterization with center of mass velocity to compute the reward from observations. Mirroring realistic settings, we assume access to data collected using a partially trained (sub-optimal) policy interacting with the environment. To obtain a partially trained policy $\pi_{p}$~\citep{FujimotoBCQ, KumarBEAR, WuTN19BRAC}, we run (online) TRPO~\citep{SchulmanTRPO} until the policy reaches a value of $1000$, $4000$, $1000$, $1000$ respectively for these environments. This policy in conjunction with exploration strategies are used to collect the datasets (see below for more details). All our results are obtained by averaging runs of five random seeds (for the planning algorithm), with the seed values being $123,246,369,492,615$. Each of our experiments are run with 1 NVidia GPU and 2 CPUs using a total of 16GB of memory.

\subsection{Dynamics Model, Policy Network And Evaluation}
We use 2 hidden layer MLPs with 512 (for~\hopper,~\walker,~\ant) or 1024 (for~\hc) ReLU activated nodes each for representing the dynamics model, use an ensemble of four such models for building the USAD, and our policy is represented with a 2 hidden layer MLP with 32 tanh activated nodes in each layer. The dynamics model is learnt using Adam~\citep{KingmaB14} and the policy parameters are learnt using model-based NPG steps~\citep{RajeswaranGameMBRL}. We set hyper-parameters and track policy learning curve by performing rollouts in the real environment; these rollouts aren't used for other purposes in the learning procedure. Similar protocols are used in prior work\citep{FujimotoBCQ, KumarBEAR, WuTN19BRAC}.

\subsection{Description Of Types Of Policies}
We build off the experimental setup of~\citep{WuTN19BRAC}. Towards this, we first go over some notation. Firstly, let $\pi_b$ represent the behavior policy, $\pi_r$ is a random policy that picks actions according to a certain probability distribution (for e.g., Gaussian $\pi_r^g$/Uniform $\pi_r^u$ etc.), $\pi_p$ a partially-trained policy, which one can assume is better than a random policy in value. Let $\pi_b^u(q)$ be a policy that plays random actions with probability $q$, and sampled actions from $\pi_b$ with probability $1-q$. Let $\pi_b^g(\beta)$ be a policy that adds zero-mean Gaussian noise with standard deviation $\beta$ to actions sampled from $\pi_b$.
Consider a behavior policy which, for instance, can be a partially trained data logging policy $\pi_b$. We consider five different exploration strategies, each corresponding to adding different kinds of exploratory noise to $\pi_b$, as described below.

\subsection{Datasets And Exploration Strategies}
For each environment, we use a combination of a behavior policy $\pi_b$, a noisy behavior policy $\tilde{\pi}_b$ (see below), and a pure random stochastic process $\pi_r$ to collect several datasets, following Wu et al.~\cite{WuTN19BRAC}. Each dataset contains the equivalent of 1 million timesteps of interactions with the environment. See below for detailed instructions.
\begin{enumerate}[leftmargin=*,label=$\mathbf{(\mathcal E\arabic*)}$]
\item \label{exp:pure} \texttt{Pure}: The entire dataset is collected with the data logging (behavioral) policy $\pi_b$.
\item \label{exp:e1} \texttt{Eps-1}: $40\%$ of the dataset is collected with $\pi_b$, another $40\%$ collected with $\pi_b^u(0.1)$, and the final $20\%$ is collected with a random policy $\pi_r$. 
\item \label{exp:e3} \texttt{Eps-3}: $40\%$ of the dataset is collected with $\pi_b$, another $40\%$ collected with $\pi_b^u(0.3)$, and the final $20\%$ is collected with a random policy $\pi_r$. 
\item \label{exp:g1} \texttt{Gauss-1}: $40\%$ of the dataset is collected with $\pi_b$, another $40\%$ collected with $\pi_b^g(0.1)$, and the final $20\%$ is collected with a random policy $\pi_r$. 
\item \label{exp:g3} \texttt{Gauss-3}: $40\%$ of the dataset is collected with $\pi_b$, another $40\%$ collected with $\pi_b^g(0.3)$, and the final $20\%$ is collected with a random policy $\pi_r$. 
\end{enumerate}

\subsection{Hyperparameter Selection}
Refer to table~\ref{table:experiment-hyperparameters} for details with regards to parameters of~\fname. For all environments and data collection strategies, we learn two-layer MLP based dynamics models with ReLU activations by minimizing the one-step prediction errors using Adam~\citep{KingmaB14} and utilize four of these models for defining the USAD. The negative reward for defining the absorbing unknown state is set as the minimum reward in the dataset $\data$ offsetted by a value that is searched over $\{30,50,100,200\}$.

{\bf Ascertaining unknown state-action pairs:} In order to ascertain unknown state-action pairs, we compute the model disagreement as: $\text{disc}(s,a)=\max_{i\ne j}||f_{\phi_i}(s,a)-f_{\phi_j}(s,a)||_2$, where, $f_{\phi_i}$ and $f_{\phi_j}$ are members of the ensemble of learnt dynamics model. Specifically, we compute $\text{disc}(s,a)$ over all state-action pairs that occur in the static dataset $\data$. Next, we can compute the mean $\mu_d$, standard deviation $\sigma_d$ and the max $m_d$ of the disagreements evaluated for every state-action pair occuring in the dataset. Then, we utilize an upper-confidence inspired strategy by defining a threshold $\text{thresh}= \mu_d + \beta\cdot\sigma_d$. The value of beta is tuned between $0$ to $\beta_{\max} = (m_d-\mu_d)/\sigma_d$ in steps of $5$. For any model-based rollout encountered during planning, if the discrepancy of the state-action pair at a given timestep exceeds $\text{thresh}$, the rollout is truncated at this timestep and is assigned a large negative reward. We emphasize that for every environment, all hyper-parameters (except for $\beta$) is maintained at the same value across all exploration settings.

\begin{table}[htbp]
\vspace*{-4mm}\caption{Hyper-parameters for each environment for~\fname. Note that most hyper-parameters are common across domains, and the differences are primarily in reward penalty and number of fitting epochs, which are necessarily environment specific.}\vspace{10pt}
\begin{minipage}{.49\linewidth}
\begin{adjustbox}{width=\columnwidth,center}
\begin{tabular}{l|c}
\toprule
\multicolumn{2}{c}{\bf Environment: Ant-v2}\\
\midrule
		Parameter & Value \\
		\midrule
		Dynamics Model & MLP(512, 512)\\
		Activation & ReLU\\
		\# Training Epochs & 300\\
		Adam Stepsize & 5e-4\\
		Batch Size & 256\\
		Horizon & 500\\
		{Negative Reward} & $r_{\min}(\data)-100$\\
		USAD & 4-dynamics models\\
\bottomrule
\end{tabular}
\end{adjustbox}
\end{minipage}
\hspace*{5pt}
\begin{minipage}{.49\linewidth}
\begin{adjustbox}{width=\columnwidth,center}
\begin{tabular}{l|c}
\toprule
\multicolumn{2}{c}{\bf Environment: Hopper-v2}\\ %
\midrule
		Parameter & Value \\
		\midrule
		Dynamics Model & MLP(512, 512)\\
		Activation & ReLU\\
		\# Training Epochs & 300\\
		Adam Stepsize & 5e-4\\
		Batch Size & 256\\
		Horizon & 400\\
		Negative Reward & $r_{\min}(\data)-50$\\
		USAD & 4-dynamics models\\
\bottomrule
\end{tabular}
\end{adjustbox}
\end{minipage}
\vspace{10pt}
\begin{minipage}{.49\linewidth}
\begin{adjustbox}{width=\columnwidth,center}
\begin{tabular}{l|c}
\toprule
\multicolumn{2}{c}{\bf Environment: HalfCheetah-v2}\\% & \multicolumn{3}{c}{\bf Partially trained policy: 4206} \\
\midrule
		Parameter & Value \\
		\midrule
		Dynamics Model & MLP(1024,1024)\\
		Activation & ReLU\\
		\# Training Epochs & 3000\\
		Adam Stepsize & 5e-4\\
		Batch Size & 256\\
		Horizon & 500\\
		Negative Reward & $r_{\min}(\data)-200$\\
		USAD & 4-dynamics models\\
\bottomrule
\end{tabular}
\end{adjustbox}
\end{minipage}
\hspace*{5pt}
\begin{minipage}{.49\linewidth}
\begin{adjustbox}{width=\columnwidth,center}
\begin{tabular}{l|c}
\toprule
\multicolumn{2}{c}{\bf Environment: Walker-v2}\\ %
\midrule
		Parameter & Value \\
		\midrule
		Dynamics Model & MLP(512, 512)\\
		Activation & ReLU\\
		\# Training Epochs & 300\\
		Adam Stepsize & 5e-4\\
		Batch Size & 256\\
		Horizon & 400\\
		Negative Reward & $r_{\min}(\data)-30$\\
		USAD & 4-dynamics models\\
\bottomrule
\end{tabular}
\end{adjustbox}
\end{minipage}%
\label{table:experiment-hyperparameters}
\end{table}

\begin{table}[htbp]
\vspace*{-4mm}\caption{Hyper-parameters for model-based policy optimization. Note that most hyperparameters are common except the number of iterations and exploration noise.}\vspace{10pt}
\begin{minipage}{.49\linewidth}
\begin{adjustbox}{width=\columnwidth,center}
\begin{tabular}{l|c}
\toprule
\multicolumn{2}{c}{\bf Environment: Ant-v2}\\
\midrule
		Parameter & Value \\
		\midrule
		Policy Net & MLP(32,32)\\
		Non-linearity & Tanh\\
		\# updates & 1000\\
		$\log\sigma_{\text{init}}$ & -0.1\\
		$\log\sigma_{\text{min}}$ & -2.0\\
		\# trajectories for gradient & 200\\
		\# Eval trajectories & 25\\
		\# CG Steps/Damping & 10, 1e-4\\
\bottomrule
\end{tabular}
\end{adjustbox}
\end{minipage}
\hspace*{5pt}
\begin{minipage}{.49\linewidth}
\begin{adjustbox}{width=\columnwidth,center}
\begin{tabular}{l|c}
\toprule
\multicolumn{2}{c}{\bf Environment: Hopper-v2}\\ %
\midrule
		Parameter & Value \\
		\midrule
		Policy Net & MLP(32,32)\\
		Non-linearity & Tanh\\
		\# updates & 500\\
		$\log\sigma_{\text{init}}$ & -0.25\\
		$\log\sigma_{\text{min}}$ & -2.0\\
		\# trajectories for gradient & 50\\
		\# Eval trajectories & 25\\
		\# CG Steps/Damping & 25, 1e-4\\
\bottomrule
\end{tabular}
\end{adjustbox}
\end{minipage}
\vspace{10pt}
\begin{minipage}{.49\linewidth}
\begin{adjustbox}{width=\columnwidth,center}
\begin{tabular}{l|c}
\toprule
\multicolumn{2}{c}{\bf Environment: HalfCheetah-v2}\\% & \multicolumn{3}{c}{\bf Partially trained policy: 4206} \\
\midrule
		Parameter & Value \\
		\midrule
		Policy Net & MLP(32,32)\\
		Non-linearity & Tanh\\
		\# updates & 2500\\
		$\log\sigma_{\text{init}}$ & -1.0\\
		$\log\sigma_{\text{min}}$ & -2.0\\
		\# trajectories for gradient & 40\\
		\# Eval trajectories & 25\\
		\# CG Steps/Damping & 25, 1e-4\\
\bottomrule
\end{tabular}
\end{adjustbox}
\end{minipage}
\hspace*{5pt}
\begin{minipage}{.49\linewidth}
\begin{adjustbox}{width=\columnwidth,center}
\begin{tabular}{l|c}
\toprule
\multicolumn{2}{c}{\bf Environment: Walker-v2}\\ %
\midrule
		Parameter & Value \\
		\midrule
		Policy Net & MLP(32,32)\\
		Non-linearity & Tanh\\
		\# updates & 1000\\
		$\log\sigma_{\text{init}}$ & -0.5\\
		$\log\sigma_{\text{min}}$ & -2.0\\
		\# trajectories for gradient & 100\\
		\# Eval trajectories & 25\\
		\# CG Steps/Damping & 25, 1e-4\\
\bottomrule
\end{tabular}
\end{adjustbox}
\end{minipage}\vspace*{-7mm}
\label{table:policy-optimization-hyperparameters}
\end{table}
With regards to the policy and the planning algorithm, we consider a  $(32,32)$ tanh MLP optimized using normalized model-based NPG steps (see, for instance, the work of~\citet{RajeswaranGameMBRL} for the model-based NPG algorithm). Parameters of model-based NPG is described in table~\ref{table:policy-optimization-hyperparameters}.

\clearpage

\begin{table}[H]
\centering
\vspace*{-20pt}
\caption{Results in the four environments and five exploration configurations. }
\label{table:all_results_full}
\begin{tabular}{lccccc}
\toprule
\multicolumn{3}{c}{\bf Environment: Ant-v2} & \multicolumn{3}{c}{\bf Partially trained policy: 1241} \\
\midrule
Algorithm &
\vtop{\hbox{\strut \texttt{Pure}}\hbox{\strut\ref{exp:pure}}} &
\vtop{\hbox{\strut \texttt{Eps-1}}\hbox{\strut\ref{exp:e1}}} &
\vtop{\hbox{\strut \texttt{Eps-3}}\hbox{\strut\ref{exp:e3}}} &
\vtop{\hbox{\strut \texttt{Gauss-1}}\hbox{\strut\ref{exp:g1}}} &
\vtop{\hbox{\strut \texttt{Gauss-3}}\hbox{\strut\ref{exp:g3}}}
\\
\midrule
SAC~\citep{HaarnojaSAC} & 0 & -1109 & -911 & -1071 & -1498\\
BC & 1235 & 1300 & 1278 & 1203 & 1240\\
BCQ~\citep{FujimotoBCQ} & 1921 & 1864 & 1504 & 1731 & 1887 \\
BEAR~\citep{KumarBEAR} & 2100 & 1897 & 2008 & 2054 & 2018 \\
MMD\_vp~\citep{WuTN19BRAC}      & \underline{2839}  & \underline{2672}  & \underline{2602} & \underline{2667} & 2640 \\
KL\_vp~\citep{WuTN19BRAC}       & 2514  & 2530  & 2484  & 2615 & \underline{2661} \\
KL\_dual\_vp~\citep{WuTN19BRAC} &2626 & 2334 & 2256 & 2404 & 2433 \\
W\_vp~\citep{WuTN19BRAC}       & 2646  & 2417  & 2409 & 2474 & 2487 \\
MMD\_pr~\citep{WuTN19BRAC}      & 2583  & 2280  & 2285 & 2477 & 2435 \\
KL\_pr ~\citep{WuTN19BRAC}      & 2241  & 2247  & 2181  & 2263 & 2233 \\
KL\_dual\_pr~\citep{WuTN19BRAC} &2218 & 1984 & 2144 & 2215 & 2201 \\
W\_pr ~\citep{WuTN19BRAC}      & 2241  & 2186  & 2284  & 2365 & 2344 \\
\midrule
Best Baseline & 2839 & 2672 & 2602 & 2667 & 2661\\
\midrule
\fname~(Ours) & \hbox{\strut {\bf 3663 $\pm$247}} & \vtop{\hbox{\strut \bf{3305 $\pm$413}}} & \vtop{\hbox{\strut {\bf 3008 $\pm$231}}} & \vtop{\hbox{\strut {\bf3329 $\pm$270}}} & \vtop{\hbox{\strut {\bf 3693 $\pm$33}}} \\
\bottomrule
\end{tabular}

\begin{tabular}{lccccc}
\toprule
\multicolumn{3}{c}{\bf Environment: Hopper-v2} & \multicolumn{3}{c}{\bf Partially trained policy: 1202} \\
\midrule
Algorithm &
\vtop{\hbox{\strut \texttt{Pure}}\hbox{
    \strut\ref{exp:pure}
}} &
\vtop{\hbox{\strut \texttt{Eps-1}}\hbox{
    \strut\ref{exp:e1}
}} &
\vtop{\hbox{\strut \texttt{Eps-3}}\hbox{
    \strut\ref{exp:e3}
}} &
\vtop{\hbox{\strut \texttt{Gauss-1}}\hbox{
    \strut\ref{exp:g1}
}} &
\vtop{\hbox{\strut \texttt{Gauss-3}}\hbox{\strut
    \ref{exp:g3}}}
\\
\midrule
SAC~\citep{HaarnojaSAC} &0.2655 & 661.7 & 701 & 311.2 & 592.6 \\
BC &1330 & 129.4 & 828.3 & 221.1 & 284.6 \\
BCQ~\citep{FujimotoBCQ} &1543 & 1652 & 1632 & 1599 & 1590 \\
BEAR~\citep{KumarBEAR} &0 & 1620 & 2213 & 1825 & 1720 \\
MMD\_vp~\citep{WuTN19BRAC}      & 2291 & 2282 & 1892 & \underline{2255} & 1458 \\
KL\_vp~\citep{WuTN19BRAC}       & \underline{2774} & \underline{2360} & \underline{2892} & 1851 & 2066 \\
KL\_dual\_vp~\citep{WuTN19BRAC} &1735 & 2121 & 2043 & 1770 & 1872 \\
W\_vp~\citep{WuTN19BRAC}       & 2292  & 2187  & 2178  & 1390 & 1739 \\
MMD\_pr~\citep{WuTN19BRAC}      & 2334 & 1688 & 1725 & 1666 & \underline{2097} \\
KL\_pr~\citep{WuTN19BRAC}       & 2574  & 1925  & 2064  & 1688 & 1947 \\
KL\_dual\_pr~\citep{WuTN19BRAC} &2053 & 1985 & 1719 & 1641 & 1551 \\
W\_pr~\citep{WuTN19BRAC}       & 2080  & 2089  & 2015  & 1635 & \underline{2097} \\
\midrule
Best Baseline & 2774 & 2360 & 2892 & 2255 & 2097\\
\midrule
\fname~(Ours) & \hbox{\strut {\bf 3642 $\pm$54}} & \vtop{\hbox{\strut \bf{3724 $\pm$46}}} & \vtop{\hbox{\strut {\bf 3535 $\pm$91}}} & \vtop{\hbox{\strut {\bf 3653 $\pm$52}}} & \vtop{\hbox{\strut {\bf 3648 $\pm$148}}} \\
\bottomrule
\end{tabular}

\begin{tabular}{lccccc}
\toprule
\multicolumn{3}{c}{\bf Environment: Walker-v2} & \multicolumn{3}{c}{\bf Partially trained policy: 1439} \\
\midrule
Algorithm &
\vtop{\hbox{\strut \texttt{Pure}}\hbox{\strut\ref{exp:pure}}} &
\vtop{\hbox{\strut \texttt{Eps-1}}\hbox{\strut\ref{exp:e1}}} &
\vtop{\hbox{\strut \texttt{Eps-3}}\hbox{\strut\ref{exp:e3}}} &
\vtop{\hbox{\strut \texttt{Gauss-1}}\hbox{\strut\ref{exp:g1}}} &
\vtop{\hbox{\strut \texttt{Gauss-3}}\hbox{\strut\ref{exp:g3}}}
\\
\midrule
SAC~\citep{HaarnojaSAC} &131.7 & 213.5 & 127.1 & 119.3 & 109.3 \\
BC &1334 & 1092 & 1263 & 1199 & 1137 \\
BCQ~\citep{FujimotoBCQ} &2095 & 1921 & 1953 & 2094 & 1734 \\
BEAR~\citep{KumarBEAR} &2646 & 2695 & 2608 & 2539 & 2194 \\
MMD\_vp~\citep{WuTN19BRAC}      & 2694  & 3241  & \underline{\bf 3255} & 2893 & \underline{\bf 3368} \\
KL\_vp~\citep{WuTN19BRAC}       & \underline{2907}  & 3175  & 2942 & \underline{\bf 3193} & 3261 \\
KL\_dual\_vp~\citep{WuTN19BRAC} & 2575 & \underline{\bf 3490} & 3236 & 3103 & 3333 \\
W\_vp~\citep{WuTN19BRAC}       & 2635  & 2863  & 2758  & 2856 & 2862 \\
MMD\_pr~\citep{WuTN19BRAC}      & 2670  & 2957 & 2897 & 2759 & 3004 \\
KL\_pr~\citep{WuTN19BRAC}       & 2744  & 2990  & 2747  & 2837 & 2981 \\
KL\_dual\_pr~\citep{WuTN19BRAC} & 2682 & 3109 & 3080 & 2357 & 3155 \\
W\_pr~\citep{WuTN19BRAC}       & 2667  & 3140  & 2928  & 1804 & 2907 \\
\midrule
Best Baseline & 2907 & 3490 & 3255 & 3193 & 3368\\
\midrule
\fname~(Ours) & \hbox{\strut {\bf 3709 $\pm$159}} & \hbox{\strut {2899 $\pm$588}} & \hbox{\strut {\bf 3186 $\pm$92}} & \hbox{\strut {\bf 4027 $\pm$314}} & \hbox{\strut {\bf 2828 $\pm$589}} \\
\bottomrule
\end{tabular}
\end{table}

\begin{table*}[h!]
\centering
\begin{tabular}{lccccc}
\toprule
\multicolumn{3}{c}{\bf Environment: HalfCheetah-v2} & \multicolumn{3}{c}{\bf Partially trained policy: 4206} \\
\midrule
Algorithm &
\vtop{\hbox{\strut \texttt{Pure}}\hbox{\strut\ref{exp:pure}}} &
\vtop{\hbox{\strut \texttt{Eps-1}}\hbox{\strut\ref{exp:e1}}} &
\vtop{\hbox{\strut \texttt{Eps-3}}\hbox{\strut\ref{exp:e3}}} &
\vtop{\hbox{\strut \texttt{Gauss-1}}\hbox{\strut\ref{exp:g1}}} &
\vtop{\hbox{\strut \texttt{Gauss-3}}\hbox{\strut\ref{exp:g3}}}
\\
\midrule
SAC~\citep{HaarnojaSAC} &5093 & 6174 & 5978 & 6082 & 6090 \\
BC &4465 & 3206 & 3751 & 4084 & 4033 \\
BCQ~\citep{FujimotoBCQ} &5064 & 5693 & 5588 & 5614 & 5837 \\
BEAR~\citep{KumarBEAR} &5325 & 5435 & 5149 & 5394 & 5329 \\
MMD\_vp~\citep{WuTN19BRAC}      & 6207 & \underline{\bf 6307} & 6263 & \underline{\bf 6323} & \underline{\bf 6400} \\
KL\_vp~\citep{WuTN19BRAC}       & 6104 & 6212 & 6104 & 6219 & 6206 \\
KL\_dual\_vp~\citep{WuTN19BRAC} &\underline{\bf 6209} & 6087 & \underline{\bf 6359} & 5972 & 6340 \\
W\_vp~\citep{WuTN19BRAC}       & 5957  & 6014  & 6001  & 5939 & 6025 \\
MMD\_pr~\citep{WuTN19BRAC}      & 5936 & 6242 & 6166 & 6200 & 6294 \\
KL\_pr~\citep{WuTN19BRAC}       & 6032  & 6116  & 6035  & 5969 & 6219 \\
KL\_dual\_pr~\citep{WuTN19BRAC} &5944 & 6183 & 6207 & 5789 & 6050 \\
W\_pr~\citep{WuTN19BRAC}       & 5897 & 5923 & 5970 & 5894 & 6031 \\
\midrule
Best Baseline & 6209 & 6307 & 6263 & 6323 & 6400\\
\midrule
\fname~(Ours) & \hbox{\strut {6028 $\pm$192}} & \hbox{\strut {5861 $\pm$152}} & \hbox{\strut {5869 $\pm$139}} & \hbox{\strut {6026 $\pm$74}} & \hbox{\strut {5892 $\pm$128}} \\
\bottomrule
\end{tabular}
\end{table*}

\subsection{Ablation Study with the \texttt{Pure-partial} dataset}\label{app:ablationfigs}

\begin{figure}[H]
	\includegraphics[width=\textwidth]{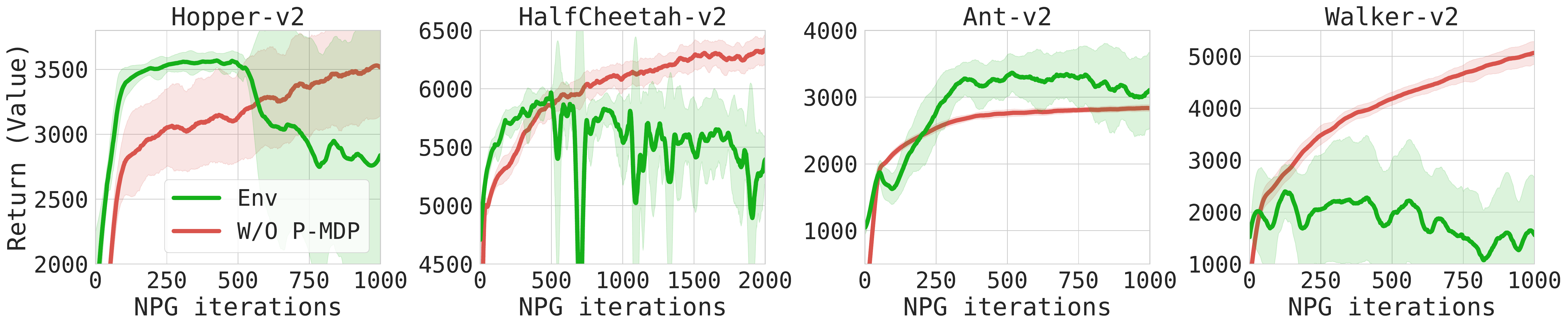}
	\caption{Learning curve using na\"ive MBRL with the \texttt{Pure-partial} dataset. Contrast the learning exhibited by na\"ive MBRL in this figure with~\fname~in Figure~\ref{fig:pessimism_lowerbound}.}
	\label{fig:without_pessimism_purepartial} 
\end{figure}

\subsection{Hyperparameter Guidelines and Ablations}\label{app:hyperparameters}
We did not have resources to perform a thorough hyperparamter search, and largely used our intuitions to guide the choice of hyperparameters. We believe that better results are possible with hyperparameter optimization.  First, we present the influence of the discrepancy threshold for differentiating known and unknown states. We first define the maximum discipancy in the dataset:
\[
\mathrm{disc}_\data = \max_{(s,a) \in \data} \max_{i, j} \| f_i(s,a) - f_j(s,a) \|
\]
where $\data$ denotes offline dataset, and $f_i$ denotes $i^{th}$ dynamics model in the ensemble.

\vspace*{-10pt}

\begin{table}[h!]
\centering
\caption{Influence of discrepancy threshold on the \hopper~task. We use a penalty of $0.0$ along with episode termination for visiting unknown regions in these experiments. We train all the cases for 1000 iterations, and report the average value over the last 100 iterations.}
\label{table:disc_ablation}
\begin{tabular}{@{}ccc@{}}
\toprule
Discrepancy Threshold & Value in P-MDP & Value in true MDP \\ \midrule
$ 0.1 \times \mathrm{disc}_\data$      &    $1315.16$     &   $2082.21$         \\
$ 0.2 \times \mathrm{disc}_\data$      &    $2479.92$     &   $3244.48$         \\
$ 0.5 \times \mathrm{disc}_\data$      &    $3074.75$     &   $3359.66$         \\
$ 1.0 \times \mathrm{disc}_\data$      &    $3543.23$     &   $3595.60$         \\
$ 5.0 \times \mathrm{disc}_\data$      &    $3245.66$     &   $3027.59$         \\
Naive-MBRL                             &    $3656.08$     &   $2809.66$         \\
\bottomrule
\end{tabular}
\end{table}

\newpage 

Our general observations and guidelines for hyperparameters are:
\begin{enumerate}
    \itemsep0em
    \item In Table~\ref{table:disc_ablation}, we first note that $ 0.1 \times \mathrm{disc}_\data$ has the most amount of pessimism and Naive-MBRL has the least/no amount of pessimism. We observe that we obtain best results in the true MDP with an intermediate level of pessimism. Having either too much pessimism or no pessimism both lead to poor results, but for very different reasons that we outline below.
    \item A high degree of pessimism makes policy optimization in the P-MDP difficult. The optimization process may be slow or highly noisy. This is due to non-smoothness introduced in the dynamics and reward due to abrupt changes involving early episode terminations. If difficulty in policy optimization is observed in the P-MDP, we recommend considering reducing the degree of pessimism.
    \item With a lack or low degree of pessimism, policy optimization is typically easier, but the performance in the true MDP might degrade. If it is observed that the value in the P-MDP overestimates the value in the true MDP substantially, then we recommend increasing the degree of pessimism. 
    \item For the tasks considered in this work, positive rewards indicate progress towards the goal. Most of the locomotion tasks involve forward velocity as the primary component of the the reward term. In these cases, we observed that the choice of reward penalty for going into unknown regions did not play a crucial role, as long as it was $\leq 0$. The degree of influence of this parameter in other environments is yet to be determined, and beyond the scope of our empirical study.
\end{enumerate}

\end{document}